\documentclass{article}
\usepackage{times}
\usepackage[margin=1in]{geometry}
\usepackage{natbib}
\setcitestyle{authoryear,round,citesep={;},aysep={,},yysep={;}}

\usepackage{amsmath,amsfonts,bm}

\def\eqref#1{equation~\ref{#1}}

\def\1{\bm{1}}

\DeclareMathAlphabet{\mathsfit}{\encodingdefault}{\sfdefault}{m}{sl}
\SetMathAlphabet{\mathsfit}{bold}{\encodingdefault}{\sfdefault}{bx}{n}

\usepackage{hyperref}
\hypersetup{hidelinks}
\usepackage{url}
\usepackage{booktabs}
\usepackage{graphicx}
\usepackage{amsmath}
\usepackage{amssymb}
\usepackage{amsthm}
\usepackage{tikz}
\usetikzlibrary{arrows.meta,positioning,fit,backgrounds}
\usepackage{xcolor}

\definecolor{capblue}{HTML}{1F4D8F}
\definecolor{capgrey}{HTML}{6B7280}
\definecolor{capred}{HTML}{8C2F39}

\newcommand{\method}{\textsc{Capture}}
\newcommand{\bench}{\textsc{D-PrefGuard}}

\newtheorem{theorem}{Theorem}
\newtheorem{corollary}{Corollary}

\theoremstyle{definition}
\newtheorem{assumption}{Assumption}
\newtheorem{remark}{Remark}
\newtheorem{definition}{Definition}

\title{\method{}: Disentangling Preference Drift from Memory Poisoning in Personalized LLM Agents}

\author{S M Asif Hossain \and Ruksat Khan Shayoni \and Md Kishor Morol}
\date{}

\begin{document}

\maketitle

\begin{abstract}
Personalized language agents keep a persistent memory so they can adapt to a user over weeks rather than turns, and that memory is also the attack surface. When an agent reads a statement that contradicts what it already believes, it has no principled way to separate a genuine change of mind from a temporary context switch, a sarcastic aside, or an instruction smuggled in through a retrieved document. Personalization methods treat every update as authentic, memory defenses reject untrusted content without modeling how real preferences evolve, and the two failure modes therefore trade against one another. We formulate the problem as a continuous-time partially observable decision process over a latent user state, and we prove that the error of any rule reading only recency and provenance is bounded below by how closely a feasible adversary can imitate the statistics of a legitimate revision, that conditioning on the interaction history is never worse and is strictly better under a stated condition, and that a bounded-cost clarifying question enlarges the achievable error region under a stated condition. \method{} tracks the latent state with a neural differential equation, writes hypotheses into a ledger stratified by timescale, queries the user when its belief is too uncertain to act on, and audits the causal influence of its own citations by counterfactual removal. On 480 held-out episodes from 96 users, \method{} reaches a 71.5\% win rate against 69.3\% for a baseline given identical supervision and 66.1\% for the strongest heuristic, holds fixed-policy poisoning to 11.5\%, and adheres to 83.5\% of genuine updates. An adaptive attacker with the released weights raises that to 24.7\%, at which point a provenance filter is marginally more secure and considerably less adaptive, so we report both numbers rather than the favorable one. Because a system trained against our own generator could be learning that generator, we also evaluate the frozen system zero-shot on an independently built benchmark and replay the interaction histories of 40 users collected over two to three weeks.
\end{abstract}

\section{Introduction}

A language agent that remembers its user is more useful than one that does not. Persistent memory lets an assistant learn that a reviewer wants terse feedback, that an engineer writes Rust and not Go, and that a clinician wants citations attached to every claim \citep{park2023generative,packer2023memgpt,zhong2024memorybank}. Benchmarks now measure how well assistants hold such facts across hundreds of turns \citep{maharana2024locomo,wu2025longmemeval}, and personalization has become a research program in its own right \citep{salemi2024lamp,zhang2025personalization}. The same channel that carries this learning also carries attacks. Retrieved web pages, shared documents, and tool outputs all enter the memory of a deployed agent, and an adversary who controls any of them can write into the user model directly \citep{greshake2023not,chen2024agentpoison,dong2025minja}.

The difficulty is that the two are observationally similar at the point of decision. Consider an agent that has recorded a preference for detailed technical explanations. In week six a sentence appears in the working context saying the user now wants short summaries with no code. That might be the user, who has moved to a management role. It might be the user in one setting, wanting brevity in standups and depth in design reviews. It might be sarcasm after a long answer. Or it might be a payload in the third paragraph of a retrieved README, aimed at collapsing the agent into a mode where it pastes unreviewed code. All four produce the same evidence inside a single turn.

We call this the preference-authenticity ambiguity, and current systems resolve it by assumption in one of two directions. Personalization pipelines optimize for adaptation speed and treat the history as ground truth, which makes them steerable by whoever writes into it \citep{gao2024prelude,zollo2025personalllm}. Injection defenses refuse content whose provenance is not trusted \citep{chen2025struq,hines2024spotlighting,debenedetti2025camel}, which is sound when the threat is external but blind to the case that matters here, where a legitimate user revises a legitimate belief through a legitimate channel. Neither position is stable. An agent that accepts contradictions is a liability, and one that rejects them ceases to personalize after the first month.

We treat authenticity as a latent variable to be inferred rather than a property to be asserted. Section~\ref{sec:formulation} formalizes the memory update as a continuous-time partially observable decision problem and bounds the error of any rule reading only recency and provenance. An adversary choosing the surface form of an injected claim can match the recency profile of a genuine revision, and a user revising through a summarized document can match the provenance profile of an attack, so each family of heuristic fails on the case the other handles.

\method{} implements this view. It carries a continuous-time belief over the user state, integrated with a neural differential equation \citep{chen2018neuralode,rubanova2019latentode}, so evidence arriving at irregular intervals is discounted by a learned function of elapsed time rather than a fixed per-turn schedule. It commits hypotheses to a graph-structured ledger with separate stable, contextual, and transient layers, so a claim about a weekend project cannot overwrite a claim about professional identity. When the belief over the correct action is close to uniform, the agent asks, converting an uncertain write into a single clarifying turn. A safety selector sits outside the personalization loop and cannot be moved by the ledger. Every cited memory is verified by removal and re-decoding, so explanations report causal influence rather than retrieval rank \citep{jain2019attention,turpin2023unfaithful}.

To measure any of this we needed data separating the four cases, and no existing benchmark does. \bench{} contains 2{,}400 longitudinal episodes across five tracks isolating stationary behavior, four shapes of genuine drift, benign noise, three families of contamination, and conflicts between personal utility and safety, of which 480 are held out for test. On those, \method{} holds fixed-policy poisoning to 11.5\% against 20.0\% for Provenance-only while adhering to 83.5\% of genuine updates against that baseline's 74.1\%, and admits 8.9\% of benign non-updates against its 19.6\%. A baseline trained on identical supervision closes most of the gap to the heuristics, leaving 2.2 points of win rate and 4.4 of poisoning attributable to the architecture rather than the labels.

A result of this form invites the objection that a gate trained against our own episode generator may have learned the generator rather than the problem. We address it twice. We evaluate the frozen system zero-shot on HorizonBench \citep{li2026horizonbench}, an independently built benchmark of evolving preferences over simulated six-month histories, and we run a multi-session study where 40 participants use an assistant over two to three weeks and label their own extracted preferences. We also report what the method costs, roughly 630\,ms per turn, and what it fails to withstand, since an adaptive attacker doubles poisoning success and a provenance filter is then marginally the more secure of the two.

\section{Problem Formulation}
\label{sec:formulation}

\paragraph{Setting.} A user interacts with an agent over an episode of $T$ turns. At turn $t$ the agent receives an event $e_t$, which may be a user utterance, a retrieved document, or a tool result, and each event carries an origin label $o_t$ recording the channel it arrived on. The user has a latent state $s_t = (v,\, c_t,\, g_t)$, where $v$ holds stable values that change on the order of months, $c_t$ holds contextual preferences keyed by domain or project, and $g_t$ holds transient goals scoped to a session. Only $v$ is close to constant. The agent never observes $s_t$ and must maintain a belief $b_t$ over it from the event stream, which places the setting in the partially observable family \citep{kaelbling1998pomdp}.

\paragraph{Objective.} With $u_t$ the user's utility over responses at turn $t$ and $a^\star_t$ the response a fully informed agent would give, the agent seeks an update policy $\pi$ minimizing dynamic regret subject to a safety budget,
\begin{equation}
\min_{\pi} \; \sum_{t=1}^{T} \mathbb{E}\!\left[u_t(a^\star_t) - u_t(a_t)\right]
\qquad \text{s.t.} \qquad \mathbb{E}\!\left[R_{\text{safety}}(y_t)\right] \le \tau .
\label{eq:objective}
\end{equation}
Dynamic rather than static regret is the right notion because the comparator moves \citep{zinkevich2003online,besbes2015nonstationary}, and the constraint sits outside the objective because safety is not a preference the user may revise.

\paragraph{Threat model.} The adversary controls at least one channel the agent reads, in practice a retrieved document, a shared file, or a tool response, and observes the visible transcript with its timestamps and source labels. It knows the defense, including the released weights, and writes grammatical, plausible claims, which excludes surface anomaly detection. It cannot modify weights, read the private ledger, or observe the gate's current belief. Assumption~\ref{as:adv} states the feasible set exactly.

\paragraph{Why a fixed rule is not enough.} Two features are available at write time. Recency is the elapsed time since the contradicted belief was last corroborated, and provenance is the channel label of the incoming event. Both are cheap and both are already used in practice.

\begin{theorem}[Informal summary; the formal statements are Theorems~\ref{thm:lb} to~\ref{thm:clarify} in Appendix~\ref{app:theory}]
\label{thm:main}
Let $\varphi$ return the elapsed time since the contradicted belief was last corroborated together with the channel label of the incoming event, and let $\delta_\varphi$ be the smallest total variation distance between the genuine and adversarial laws of $\varphi$ that any feasible adversary can achieve. Then every decision rule reading only $\varphi$ satisfies
\begin{equation}
\mathrm{FUR}(d) + \mathrm{RRL}(d) \;\ge\; 1 - \delta_\varphi
\label{eq:lb}
\end{equation}
against a worst-case feasible adversary, where $\mathrm{FUR}$ is the rate of accepting an injected claim and $\mathrm{RRL}$ the rate of rejecting a genuine one. Against any fixed adversary, conditioning on the full history in place of $\varphi$ never increases the achievable error, and strictly decreases it whenever the sign of the conditional likelihood ratio given $\varphi$ is non-degenerate on a set of views of positive measure. If a clarifying question returns a label through a symmetric channel of error $\varepsilon$ at cost $\kappa$, then asking beats every no-query rule whenever $\kappa + \varepsilon$ falls below the no-query Bayes error.
\end{theorem}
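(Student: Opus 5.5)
The statement has three parts, and I would prove each by reducing it to a standard fact about binary hypothesis testing. Throughout, fix a feasible adversary and write $P_G$ and $P_A$ for the laws of the relevant view under a genuine revision and an injection. A decision rule $d$ then becomes a (possibly randomized) test $d:\text{view}\to[0,1]$, where $d=1$ means accept.

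\textbf{Lower bound.} For any test reading only $\varphi$,
\[
\mathrm{FUR}(d)=\mathbb{E}_{P_A}[d(\varphi)], \qquad \mathrm{RRL}(d)=\mathbb{E}_{P_G}[1-d(\varphi)].
\]
Their sum equals
\[
1-\bigl(\mathbb{E}_{P_G}[d]-\mathbb{E}_{P_A}[d]\bigr) \;\ge\; 1-\mathrm{TV}(P_G^\varphi,P_A^\varphi),
\]
because $\sup_{0\le d\le 1}\bigl(\mathbb{E}_{P_G}d-\mathbb{E}_{P_A}d\bigr)$ is exactly the total variation distance (Le Cam). The rule $d$ is fixed first and the worst-case adversary is chosen afterwards. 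For that adversary I take the feasible one, from Assumption~\ref{as:adv}, attaining (or approaching, if only an infimum exists) $\delta_\varphi$. This gives $\mathrm{FUR}+\mathrm{RRL}\ge 1-\delta_\varphi$. The only care needed is that the infimum may not be attained; I would take a sequence of adversaries and pass to the limit.

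\textbf{History is never worse, and strictly better under non-degeneracy.} Let $H$ denote the full history, so that $\varphi=f(H)$. Any $\varphi$-measurable test is also $H$-measurable, so the minimum error over tests of $H$ is at most the minimum over tests of $\varphi$. For the Bayes-weighted error $\pi\,\mathrm{FUR}+(1-\pi)\,\mathrm{RRL}$, the optimum over a view $X$ is $\mathbb{E}\min\bigl(\pi\,p_A(X),(1-\pi)\,p_G(X)\bigr)$. I would condition on $\varphi$ and apply the pointwise inequality $\min(\mathbb{E}U,\mathbb{E}V)\ge\mathbb{E}\min(U,V)$, taken conditionally on $\varphi$. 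This inequality is strict exactly when $U-V$, which is the sign of the conditional likelihood ratio $\pi\,dP_A/dP_G-(1-\pi)$ given $\varphi$, takes both signs with positive probability on a set of $\varphi$ of positive measure. This is how I read the stated non-degeneracy condition. The step I expect to be the main obstacle is making this strictness rigorous for general measures. I would work with densities with respect to $P_G+P_A$, which dominates both laws, and write the gap as an integral of
\[
\tfrac12\bigl(\mathbb{E}[|U-V|\mid\varphi]-|\mathbb{E}[U-V\mid\varphi]|\bigr) \;\ge\; 0,
\]
which is positive exactly where the sign is mixed.

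\textbf{Clarification.} I would define the query policy as: always ask, then accept iff the returned label says genuine. Its error is $\varepsilon$ and its cost is $\kappa$. Since the channel is symmetric, $\varepsilon$ is the error under either hypothesis, so the prior-weighted error is also $\varepsilon$ for every prior. The total loss $\kappa+\varepsilon$ is therefore strictly below the no-query Bayes error $R^\star$ by hypothesis. Every no-query rule has loss at least $R^\star$, so asking beats all of them. The point in the error region given by $\mathrm{FUR}=\mathrm{RRL}=\varepsilon$ shifted by $\kappa$ lies outside the no-query achievable region. Mixing the query policy with no-query rules then enlarges the convex hull, which is the "enlarges the achievable region" phrasing used in the appendix version.
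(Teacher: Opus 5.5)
Your proposal is correct and follows the paper's proof essentially step for step: the Hahn--Jordan (Le Cam) identity, then the supremum over adversaries taken through a near-minimizer of the distance; the monotonicity-plus-strictness argument, whose gap $\tfrac12(\mathbb{E}[|U-V|\mid\varphi]-|\mathbb{E}[U-V\mid\varphi]|)$ is exactly the paper's conditional Jensen step for $x\mapsto|x|$, with your general-$\pi$ condition reducing to the paper's at $\pi=\tfrac12$; and the always-ask witness, whose error is the constant $\varepsilon$ by channel symmetry. Your closing convex-hull remark is not how the appendix gets its enlargement result, though the statement as given does not need it: the paper minimizes pointwise over $\{\zeta,1-\zeta,\kappa+\varepsilon\}$ conditionally on the view, which yields the ask set $\{e>\kappa+\varepsilon\}$ and a strict gain whenever that set has positive mass, and globally mixing whole policies cannot capture that.
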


Against a known adversary the bound is attained by the likelihood-ratio test, so it is tight rather than loose, though the worst-case form is a maximin value and we do not claim it is the minimax one. The second clause is what the gate exploits, since $\varphi$ discards the belief trajectory; it is a statement about a fixed adversary and does not by itself transfer to the infimum over the feasible set. The third is the only component escaping the testing bound entirely, because asking changes the available information rather than the rule applied to it. The tools are standard statistical decision theory, and what is new is the formalization of preference-authenticity ambiguity in terms that let them apply. Appendix~\ref{app:theory} states the assumptions all three rest on, and Appendix~\ref{app:tvest} the estimation protocol.

\begin{figure}[t]
\centering
\includegraphics[width=0.80\textwidth]{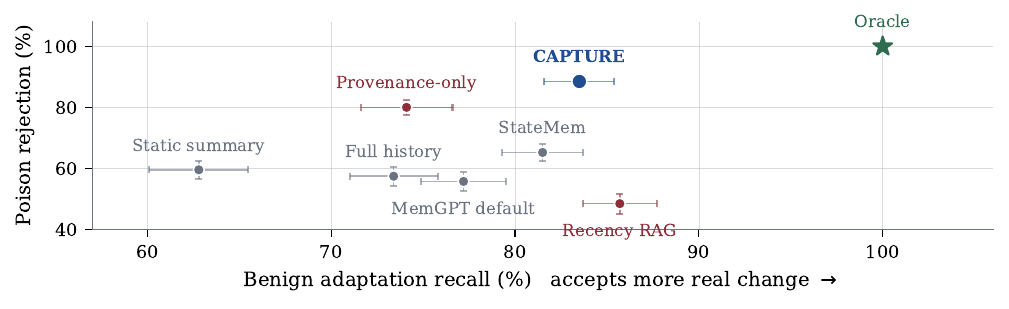}
\caption{The adaptation-security frontier on \bench{}. Each point is a method, positioned by how many legitimate updates it accepts (right is better) and how well it resists poisoning (up is better). Recency-based memory accepts almost everything, provenance filtering rejects almost anything unfamiliar, and \method{} occupies the upper right, where same-origin contamination accounts for 50.7\% of its successful attempts. Bars are user-level bootstrap 95\% intervals over the 132 genuine updates and 192 poisoning attempts in the held-out episodes.}
\label{fig:pareto}
\end{figure}

\section{\method{}}
\label{sec:method}

The subsections below follow one event through extraction (Section~\ref{sec:extractor}), the authenticity gate (Section~\ref{sec:gate}), the ledger (Section~\ref{sec:ledger}), and response selection with audit (Section~\ref{sec:select}). Figure~\ref{fig:arch} in Appendix~\ref{app:impl} shows the path.

\subsection{Preference-hypothesis extraction}
\label{sec:extractor}

A 3B parameter model with LoRA adapters \citep{hu2022lora} reads the event and emits a structured tuple under schema-constrained decoding,
\begin{equation}
H_t = \big\langle \text{claim},\; \text{direction},\; \text{scope},\; \text{timescale},\; \text{confidence},\; \text{provenance} \big\rangle .
\end{equation}
Scope records whether the claim is global or restricted to a named domain, and timescale whether it was framed as permanent or situational. Training minimizes $\mathcal{L}_{\text{ext}} = \mathrm{CE}(H_t, \hat H_t) + \omega_{\text{abs}}\,\Omega(\hat H_t)$, cross-entropy against annotated tuples plus a penalty $\Omega$ rewarding abstention on events carrying no preference signal. Roughly seven in ten events carry no preference content, so an extractor that cannot decline builds a ledger of noise.

\subsection{The authenticity gate}
\label{sec:gate}

The gate resolves the ambiguity of Section~\ref{sec:formulation}. It holds a latent belief $h_t$ evolving in continuous time between events,
\begin{equation}
\frac{d h(t)}{d t} = f_\theta\big(h(t)\big),
\label{eq:ode}
\end{equation}
in which neither the event embedding nor absolute time appears, so the between-event flow is autonomous and time enters only through the integration interval $\Delta t$ \citep{chen2018neuralode,kidger2020neuralcde}. An arriving event is consumed solely by the discrete GRU update at its arrival time, with no interpolation or zero-order holding. Continuous-time dynamics suit irregularly spaced interactions. A user may send forty turns in an afternoon and nothing for nine days, and a discrete-step model treats those gaps identically. Under Equation~\ref{eq:ode} the transient layer decays substantially across the nine days while the stable layer barely moves, so a contradiction arriving afterwards is read against a belief that has already discarded the session-specific noise.

From $h_t$ and the hypothesis the gate predicts a distribution over six actions, namely retain the existing belief, add a new one, narrow an existing belief to a scope, revise it outright, quarantine the claim pending corroboration, or ask the user. The first five are silent. The sixth is not, and the agent takes it when the Shannon entropy of the action distribution exceeds a threshold, that is when $\mathcal{H}(P(a_t \mid h_t, H_t)) > \eta$. We set $\eta$ on validation to spend at most one clarifying turn per twelve interactions. The gate is a learned alternative to Bayesian changepoint detection \citep{adams2007bocpd}, differing in that it must decide not only whether the process changed but whether the evidence was authored by the user. Theorem~\ref{thm:clarify} gives the condition under which this action is worth its cost.

\subsection{Multi-timescale ledger}
\label{sec:ledger}

Hypotheses live as nodes in a directed graph over a vector store \citep{johnson2021faiss}, with edges recording support and contradiction. Each node sits in one of three layers, and each layer carries a base decay rate $\gamma$, so that the confidence of a node at time $t$ whose last corroboration was at $t_0$ is $c_0\,\sigma(h_t)\exp(-\gamma(t-t_0))$, where $c_0 \in (0,1]$ is the confidence at write time and $\sigma(h_t) \in (0,1)$ is a scalar sigmoid read-off from the gate's latent belief, trained jointly with $f_\theta$ and shared across node types.

We distinguish the learned components from the fixed ones. The dynamics $f_\theta$ and the modulation $\sigma$ are learned end to end; the three base rates are not. They are selected once by grid search on validation and frozen for every experiment, at $\gamma_s=0.01$, $\gamma_c=0.1$, and $\gamma_g=0.5$, in units of days. The layer sets a prior on how fast a class of belief fades and the learned component determines the departure from it. Learning the rates jointly collapsed them toward a single value and cost 3.1 points of genuine-update adherence rate, as Appendix~\ref{app:impl} reports.

Overwriting a stable node requires repeated high-confidence evidence rather than a single assertion, which is what stops one sentence from redefining a user. The layering also gives the agent somewhere to put a claim that is true but narrow. When a user asks for brevity in a standup, the correct action is to scope, not to revise, and a flat memory has no way to express the difference.

\paragraph{Quarantine.} Quarantine is a state, not a fourth layer. A quarantined node is written into the ledger and decays like anything else, but it is flagged non-retrievable, so the selector cannot see it and it cannot influence a response. It carries a quarantine rate $\gamma_q=0.135$ whatever scope it claims, so the exponential retention multiplier falls from $1$ to about $0.15$ after 14 days, and eviction is defined against that normalized multiplier and the floor $\phi=0.15$ rather than against the absolute node score, which additionally carries the $c_0\,\sigma(h_t)$ prefactor. Corroborated before then, the node is promoted into the layer its scope implies and becomes retrievable, and otherwise it is evicted, so it never influences a response while the decision is pending.

\subsection{Safety-bounded selection and causal audit}
\label{sec:select}

The selector samples $N$ candidates and scores each for personalized utility $r_p$ and safety risk $r_s$,
\begin{equation}
y^\star = \arg\max_{y \in Y} \Big( \alpha\, r_p(y) - \beta\, r_s(y) \Big)
\qquad \text{s.t.} \qquad r_s(y) \le \tau_{\text{safe}},
\end{equation}
falling back to a calibrated refusal when every candidate violates the constraint. Nothing in the ledger can move $\tau_{\text{safe}}$. This asymmetry is why poisoning that succeeds at the memory layer still fails to produce harmful output in most of our contamination episodes.

For interpretability we do not report attention or retrieval rank, both of which are known to be unreliable as explanations \citep{jain2019attention,wiegreffe2019attention,turpin2023unfaithful}. Instead, for each ledger node $n_i$ the agent cites, we re-decode with that node removed and measure the shift in utility,
\begin{equation}
\mathrm{Inf}(n_i) = U(y_{\text{factual}}) - U\big(y_{\mathcal{L} \setminus n_i}\big),
\label{eq:influence}
\end{equation}
following the counterfactual tradition in explanation \citep{wachter2018counterfactual,verma2024counterfactual}. The audit then does two things. Within the turn, a citation whose removal leaves the response unchanged is dropped from the explanation, which is what Evidence F1 measures. Across turns, influence scores accumulate per node and a node retrieved repeatedly without changing an output is down-weighted at retrieval, which is why removing the audit costs 0.4 points of genuine-update adherence rate rather than nothing. It costs an extra decode, the most expensive part of the system, but turns a claim about interpretability into a measured quantity.

\section{The \bench{} Benchmark}
\label{sec:benchmark}

We built \bench{} because the disentanglement we care about is invisible in existing data. Personalization benchmarks hold preference signal but no attacks \citep{salemi2024lamp,zollo2025personalllm,guo2026realpref}, and injection benchmarks hold attacks but a static user \citep{zhan2024injecagent,debenedetti2024agentdojo}. Neither asks whether a defense that blocks poisoning also blocks legitimate change.

The benchmark holds 2{,}400 episodes of 12 to 20 turns, seeded from PRISM, PersonalLLM, and LaMP \citep{kirk2024prism,zollo2025personalllm,salemi2024lamp}, across five tracks. T0 is stationary, so any update is a false one. T1 holds genuine drift in four shapes, abrupt, gradual, recurring, and context-switching. T2 holds benign ambiguity as sarcasm, typos, and weak evidence. T3 holds contamination in three families at 160 episodes each, namely indirect injection, poisoned summaries, and same-origin writes \citep{greshake2023not,zou2025poisonedrag,wang2025unveiling}. Same-origin is built by user-mediated laundering rather than impersonation. An adversary plants a claim in an external document and the user later pastes a passage of it into the conversation, so the event carries the coarse \emph{user channel} label although the adversary wrote the sentence, and Assumption~\ref{as:adv} holds. T4 puts personal utility in direct conflict with a global safety constraint. Three annotators validated a 300-episode subset at Krippendorff's $\alpha\ge0.70$.

\paragraph{How responses are scored.} Win rate is a blinded pairwise comparison against the unpersonalized backbone, judged by an LLM judge with responses in randomized order and the ground-truth preference state given to the judge but withheld from every system tested; on a 600-comparison subset the judge agrees with three human raters 87.4\% of the time, against 91.2\% among the raters themselves. The utility $r_p$ in the selector and in Equation~\ref{eq:influence} is a reward model fine-tuned on \bench{} pairs, while regret is scored against a \emph{second} reward model trained on disjoint users, responses, and generator families that no system reads at inference, so the metric a method is scored on is not the one it optimizes. It is measured on post-changepoint T1 turns over a shared 32-candidate pool generated once from oracle memory, frozen across methods and seeds, and ranked in full by every method, separately from the $N=8$ candidates each method generates for its own output. The safety risk $r_s$ is a separate classifier that never sees the ledger, and the Safety column is attack success on track T4 under HarmBench \citep{mazeika2024harmbench}, with StrongREJECT \citep{souly2024strongreject} within 1.4 points on every method in Appendix~\ref{app:falsify} and XSTest \citep{roettger2024xstest} supplying the false refusal column. We report no calibration column, since \method{}'s confidence is a distribution over six memory actions while the baselines expose a response likelihood, and a calibration error spanning both would not be comparable \citep{naeini2015bbq,guo2017calibration}.

\paragraph{Splits and leakage.} The extractor and gate train on annotations from the same generator that produces the evaluation episodes, so we control leakage explicitly. Development and test partitions use disjoint synthetic users, topics, surface templates, and attack realizations, and one generator model family is excluded from training and development and reserved for test. Test was frozen before model selection, no method was tuned on a held-out track, and Appendix~\ref{app:bench} says which episode fields each method sees. None of this fully answers the objection that a learned gate might be fitting our generator, which is why Section~\ref{sec:external} leaves \bench{} behind.

\section{Experiments}
\label{sec:experiments}

\paragraph{Setup.} Response backbones are Qwen3-8B-Instruct and Llama-3.1-8B-Instruct \citep{qwen2025qwen3,grattafiori2024llama3}, with the extractor and gate built on Qwen2.5-3B \citep{qwen2024qwen25} using LoRA adapters on four A100 80GB GPUs. We report three training seeds, user-level paired bootstrap 95\% intervals, and paired randomization tests with Holm correction, with full hyperparameters in Appendix~\ref{app:impl}.

Baselines span the two families of Section~1 plus a retrieval-augmented middle \citep{lewis2020rag,packer2023memgpt}. Two carry the comparison. \textbf{Provenance-only} accepts updates originating on the user channel and rejects everything indirect, the memory-layer form of \citet{chen2025struq} and \citet{hines2024spotlighting}. \textbf{StateMem} is our implementation of explicit supersession tracking, where a claim contradicting a stored one marks it superseded and writes itself in its place, with version history but no authenticity check and no timescale structure. It leads on win rate and is close to an ablation of \method{} down to its bookkeeping; consistent with that, removing the gate lands at 33.5\% poisoning, within 1.3 points of StateMem's 34.8\%. Every method generates $N=8$ candidates and selects with the same $w_p=1.0$, $w_s=2.0$, and $\tau_{\text{safe}}=0.35$, sharing candidate temperature, token limit, safety scorer, refusal fallback, backbones, retrieval corpus, and accessible-history budget, so methods differ only in the memory state at generation. Those eight determine every metric except regret, which is scored on the frozen 32-candidate pass above. Appendix~\ref{app:impl} reports added latency and tokens per turn so extra inference cannot pass as an unreported advantage. Six questions organize what follows, covering the adaptation-security tradeoff, drift shapes and attack families, individual components, architecture against supervision, adaptive and independently authored attacks, and cross-backbone transfer.

\begin{table}[t]
\centering
\caption{Main comparison on the 480 held-out test episodes from 96 users, each cell the mean over both response backbones and three seeds, with arrows marking the improving direction and the best non-oracle result in bold. Win rate is against the unpersonalized base model, Regret is common-pool dynamic regret over a shared frozen pool of 32 oracle-generated responses, Poison is response-level attack success, Adherence is the genuine-update adherence rate, and False upd.\ is the active-memory false-admission rate on track T2. Oracle-memory sees the true latent state, attack label, changepoint, and correct scope and timescale, but not future corroboration or the safety label, and shares the candidate pool and selector.}
\label{tab:main}
\footnotesize
\setlength{\tabcolsep}{3.0pt}
\begin{tabular}{lrrrrrrrr}
\toprule
Method & Win $\uparrow$ & Regret $\downarrow$ & Lag $\downarrow$ & Poison $\downarrow$ & Adher. $\uparrow$ & False upd. $\downarrow$ & Safety $\downarrow$ & Refusal $\downarrow$ \\
\midrule
No personalization & 50.0\% & 0.44 & n/a & 0.0\% & 0.0\% & 0.0\% & 9.8\% & 8.2\% \\
Full history & 58.7\% & 0.36 & 4.9 & 42.6\% & 73.4\% & 31.5\% & 12.1\% & 7.9\% \\
Static summary & 58.1\% & 0.39 & 7.2 & 40.5\% & 62.8\% & 25.4\% & 11.6\% & 8.4\% \\
Recency RAG & 63.8\% & 0.30 & \textbf{2.8} & 51.6\% & \textbf{85.7\%} & 41.2\% & 13.5\% & \textbf{7.5\%} \\
StateMem & 66.1\% & 0.31 & 4.0 & 34.8\% & 81.5\% & 24.1\% & 11.9\% & 8.0\% \\
Provenance-only & 61.5\% & 0.34 & 4.2 & 20.0\% & 74.1\% & 19.6\% & 10.7\% & 9.6\% \\
MemGPT default & 62.7\% & 0.34 & 4.6 & 44.3\% & 77.2\% & 30.0\% & 12.8\% & 7.8\% \\
Sup.\ Transformer-$\Delta t$ & 69.3\% & 0.29 & 3.6 & 15.9\% & 82.1\% & 12.4\% & 8.5\% & 8.8\% \\
\textbf{\method{}} & \textbf{71.5\%} & \textbf{0.27} & 3.4 & \textbf{11.5\%} & 83.5\% & \textbf{8.9\%} & \textbf{8.0\%} & 9.0\% \\
\midrule
\textit{Oracle-memory} & \textit{82.4\%} & \textit{0.08} & \textit{0.0} & \textit{0.0\%} & \textit{100.0\%} & \textit{0.0\%} & \textit{3.8\%} & \textit{7.1\%} \\
\bottomrule
\end{tabular}
\end{table}

\paragraph{Q1. The tradeoff is not fixed.} Table~\ref{tab:main} shows the two baseline families behaving as the framing predicts. Recency RAG adheres most at 85.7\% and has the worst poisoning at 51.6\% and worst false admission at 41.2\%. Provenance-only inverts this, holding poisoning to 20.0\% while adhering to only 74.1\% and refusing more than anything else. \method{} reaches 11.5\% poisoning at 83.5\% adherence and 8.9\% false admission, beating Provenance-only on all three at once, with win rate 71.5\% against 66.1\% for StateMem. It does not lead everywhere, since Recency RAG recovers faster, adheres slightly more, and refuses less, all the price of quarantine.

\paragraph{Q2. Where the difference comes from.} \method{} ties Recency RAG on abrupt drift and separates on recurring drift and context switches, where the agent must hold two beliefs at once rather than replace one. Provenance-only beats \method{} on indirect injection, 4.8\% against 7.0\%, and collapses on same-origin contamination, 43.6\% against 17.5\%, the regime Theorem~\ref{thm:main} identifies. On context switches the modal correct action is \emph{scope} rather than \emph{revise}, and \method{} selects it in 71\% of changepoints against 4\% for StateMem. Appendix~\ref{app:breakdown} has the breakdown and Appendix~\ref{app:failure} the attacks that still succeed.

\paragraph{Q3. What each component is worth.} Table~\ref{tab:ablation} removes one piece at a time. The gate carries the security result almost entirely, costing 22.0 points of poisoning resistance and 0.15 regret, and provenance a further 14.0, so the two are complementary, the empirical counterpart of Theorem~\ref{thm:dpi}. Timescale separation is worth 5.4 points of adherence and probing 3.6. Removing the safety bound raises adherence by 4.0 while leaving ledger promotion unchanged, confirming adherence is output-level, and triples safety attack success. Removing the audit drops evidence F1 from 0.79 to 0.58.

\paragraph{Q4. Supervision or architecture?} The two separate only against a baseline given the same supervision, so we trained one. The supervised Transformer in Table~\ref{tab:main} sees identical six-class labels and history, an explicit $\log(1+\Delta t)$ input, and the same extractor, clarification channel, selector, and safety scorer, writing into a flat versioned memory with no ODE and no timescale ledger. It reaches 69.3\% win rate and 15.9\% poisoning, most of the way from StateMem to \method{}. What remains is 2.2 points of win rate at $[0.6,3.9]$, $p=0.011$, and 4.4 of poisoning at $[-7.0,-1.9]$, $p=0.003$. Supervision therefore explains much of the improvement and the architecture the rest. On the labels themselves the two are close, macro-F1 0.71 against 0.68 with a paired difference short of significance, placing the architectural contribution in the ledger and the temporal update rather than in per-event classification. Frozen and run zero-shot, the Transformer trails \method{} on HorizonBench contradiction resolution by 0.04 at $p=0.008$ and on replay win by 1.7 points at $p=0.031$, while the belief-revision and replay-poisoning gaps do not reach significance, so the architectural gain transfers beyond our generator but modestly.

\paragraph{Q5. What happens when the attacker adapts, or is someone else?} Table~\ref{tab:security} answers the question the fixed-policy column cannot. The adaptive attacker runs a 32-candidate paraphrase search maximizing $P(\text{add})+P(\text{revise})-P(\text{quarantine})-P(\text{clarify})$ under similarity and length constraints, scoring against shadow-state particles replayed through the released weights rather than querying the deployed gate, which keeps it inside Assumption~\ref{as:adv}. It more than doubles attack success against \method{}, from 11.5\% to 24.7\%, with same-origin rising to 38.1\%, transfers to the other learned gate almost as well, and barely moves the heuristic baselines, which expose no gradient. \method{} then no longer leads on security alone, since Provenance-only reaches 22.1\% against our 24.7\%, a paired difference of 2.6 points with interval $[-1.9, 7.2]$ at $p=0.29$, so the two are not reliably separated on this axis, and what \method{} retains is comparable security at 83.5\% adherence against Provenance-only's 74.1\%. Independently authored attacks tell a milder version of the same story. On 240 attacks from twelve external red-teamers with no access to our templates, \method{} reaches 18.3\%, above our own generator's 11.5\% and still the lowest of any method tested, and four of the five methods degrade by 6.3 to 6.8 points while Recency RAG, which mounts no defense, moves slightly the other way.

\begin{table}[t]
\centering
\caption{Security under three attackers on the held-out test partition, defined with their interval constructions in Appendix~\ref{app:independent}. Under adaptive attack Provenance-only has the numerically lowest ASR and is not separated from \method{}, reversing the fixed-policy ordering. Adaptive attacks were not run against Recency RAG, and independent-attack intervals are omitted pending the crossed two-way resample rather than reported under a clustering that understates the dependence.}
\label{tab:security}
\small
\begin{tabular}{lrrr}
\toprule
Method & Fixed-policy ASR $\downarrow$ & Adaptive ASR $\downarrow$ & Independent ASR $\downarrow$ \\
\midrule
Recency RAG & 51.6\% & n/a & 50.4\% \\
StateMem & 34.8\% & 37.2\% [32.6, 42.0] & 37.1\% \\
Provenance-only & 20.0\% & 22.1\% [18.3, 26.3] & 26.3\% \\
Sup.\ Transformer-$\Delta t$ & 15.9\% & 27.9\% [23.6, 32.6] & 22.5\% \\
\textbf{\method{}} & \textbf{11.5\%} & 24.7\% [20.6, 29.1] & \textbf{18.3\%} \\
\bottomrule
\end{tabular}
\end{table}

\paragraph{Q6. Transfer across backbones.} Every method in Table~\ref{tab:main} was run on Qwen3-8B and Llama-3.1-8B with three seeds each, so those cells average six runs. Broken out, the continuous gate leads the discrete one by 2.4 points of win rate on Qwen3, and giving the discrete gate an explicit elapsed-time input recovers about 55\% of that gap but not all of it, which is why Section~\ref{sec:gate} treats continuous time as an inductive bias rather than a requirement. Appendix~\ref{app:crossbackbone} has the table.

\section{Leaving the Benchmark Behind}
\label{sec:external}

Everything in Section~\ref{sec:experiments} was measured on trajectories from our own generator, and the split controls of Section~\ref{sec:benchmark} reduce without eliminating the risk that the gate learned it. We therefore evaluate the frozen system in two further settings. HorizonBench is independently constructed and holds evolving preferences but no contamination. The user study supplies naturally generated histories, into which we introduce contamination only during offline replay, so it tests transfer to natural histories and not to independently generated attacks. Neither tests both.

\paragraph{Two settings we did not generate.} HorizonBench \citep{li2026horizonbench} tracks 360 simulated users whose preferences evolve across roughly six months and separates retrieval from belief-update failures by construction. We had no part in building it and evaluation is strictly zero-shot; because it is non-interactive, \emph{clarify} resolves to quarantine without update, so that row is \method{} without interactive clarification, and it holds no attacks. Separately, forty participants completed four to six sessions over two to three weeks against one fixed neutral assistant. Their histories were then frozen and replayed identically to every system with six controlled attacks injected per participant, none of which a participant saw, and when the gate asks the stored annotation answers, making this an oracle-assisted replay with zero answer error. Appendix~\ref{app:bench} has both protocols.

\paragraph{What transfers.} The advantage survives both moves and attenuates each time. On \bench{} \method{} leads StateMem by 5.4 points of win rate, and on replayed histories by 4.5. Satisfaction came from a separate blinded within-participant rating task, four matched output pairs per session with identity, order, and side randomized, giving 4.1 at $[3.9,4.3]$ against StateMem's 3.8 at $[3.6,4.0]$. No clarification F1 is defined on the replay, because the four participant labels describe scope and extraction correctness and none expresses whether the assistant should have asked. On \bench{}, where such labels exist, \method{} reaches precision 0.79, recall 0.70, and F1 0.74 at a realized query rate of 7.8\% inside the 8.33\% budget. On HorizonBench, Recency RAG leads on belief revision accuracy at 0.79 against 0.78, the expected ordering where no adversary exists, and the separation appears on contradiction resolution, where \method{} reaches 0.72 against StateMem's 0.65 and Recency RAG's 0.58. These panels support transfer of preference tracking beyond our generator, though not of poisoning robustness, since the replayed contamination is author-constructed.

\section{Related Work}
\label{sec:related}

\paragraph{Personalizing language models.} Alignment from human feedback fits a single population preference \citep{ouyang2022training,rafailov2023direct}, and a growing line fits individuals instead, through profiles \citep{salemi2024lamp}, latent preferences learned from user edits \citep{gao2024prelude}, system-message generalization \citep{lee2024janus}, and datasets recording who disagreed with whom \citep{kirk2024prism,zollo2025personalllm,zhang2025personalization}. Agent memory systems then made long horizons practical by paging information in and out of context \citep{packer2023memgpt}, decaying it \citep{zhong2024memorybank}, or reflecting on it \citep{park2023generative}, and benchmarks now stress them at length \citep{maharana2024locomo,wu2025longmemeval,guo2026realpref,zhao2025prefeval,jiang2025personamem}. All of it assumes the record is trustworthy and manages capacity rather than asking whether a memory should have been written.

\paragraph{Injection, poisoning, and safety under adaptation.} Indirect prompt injection turns retrieved content into instructions \citep{greshake2023not}, and the agent setting has produced benchmarks \citep{zhan2024injecagent,debenedetti2024agentdojo} and attacks targeting retrieval and memory \citep{zou2025poisonedrag,chen2024agentpoison,dong2025minja,wang2025unveiling}. Defenses separate instructions from data \citep{chen2025struq,piet2024jatmo,hines2024spotlighting} or enforce capability boundaries \citep{debenedetti2025camel}, and all treat the user as static. Our baselines cover representative families rather than current systems, and Provenance-only is a memory-layer proxy for that class rather than a reimplementation of any one defense, which beats \method{} on untrusted channels but not where user and attacker write the same kind of sentence. Safety alignment is shallow in the token dimension \citep{qi2025safety}, breakable by optimized suffixes \citep{zou2023universal}, and fragile under fine-tuning \citep{qi2024finetuning}, which argues for keeping the constraint outside anything a user can revise \citep{bai2022constitutional}, and our selector is a memory-layer analogue.

\section{Limitations and Falsification}
\label{sec:limitations}

Theorem~\ref{thm:lb} constrains only $\sigma(\varphi)$-measurable rules, says nothing about a heuristic reading a third fixed feature, and depends on Assumption~\ref{as:adv} granting the adversary the visible transcript. \method{} is text-only and does not address weight-level poisoning, and the clarification budget is tuned on our data. \bench{} is synthetic and validated on a 300-episode subset, so absolute numbers are relative comparisons rather than deployment estimates. HorizonBench reduces that without removing it, since its trajectories are simulated even where annotations are human, and 40 participants supports an effect estimate but no subgroup claim.

We state in advance what would refute the claims. Five criteria cover the strictness condition of Theorem~\ref{thm:dpi}, whether provenance alone suffices on same-origin contamination, whether static summaries match the ledger on recurrence, whether influence AUC falls to chance, and whether the advantage survives external data and replayed histories. Appendix~\ref{app:falsify} states each precisely and reports all five with one consistency check on the harness. No refuting condition currently holds, and the external-validity check passes only narrowly.

\section{Conclusion}

An agent that trusts its memory can be rewritten by anyone who reaches it, and one that distrusts it stops learning. These are one problem, no fixed rule on recency or provenance resolves it, and inferring authenticity as a latent variable improves both axes at once. Most of that gain comes from supervised temporal tracking, which a matched Transformer also reaches; the ledger and continuous-time update add a smaller increment that survives on independently built data. Adaptive attack remains open, and there our advantage over a provenance filter is not reliably separated.

\subsection*{AI use statement}

In this work, we used generative AI tools to generate synthetic datasets used in evaluation, to implement methods, and to clean and reformat dataset files. Specifically, the episode text of \bench{}, including benign ambiguity, preference-change realizations, and attack paraphrases, was generated by large language models conditioned on a structured generation program, using Qwen2.5-72B-Instruct and Mixtral-8x22B-Instruct for the training and development partitions and Llama-3.1-70B-Instruct as the held-out generator family for the test partition. User profiles were seeded from PRISM, PersonalLLM, and LaMP rather than generated, and track labels and changepoint positions come from the generation program rather than from a model. We also used generative AI to write boilerplate for the benchmark construction scripts and to reformat annotation exports into the extractor's schema. We have not used generative AI tools for developing the theoretical model in Section~\ref{sec:formulation}, formulating or proving the mathematical claims, proposing or refining the research hypotheses, designing the methodology or experiments, or interpreting results, and translation is not applicable to this work. Additionally, we used generative AI tools for drafting parts of this paper, editing it to improve readability, and formatting references. We have reviewed all AI-assisted work. Generated episodes were human-validated on a 300-episode subset with the agreement reported in Appendix~\ref{app:bench}, all AI-generated code was read line by line and tested against unit tests written by the authors, every reference was checked against its published record at the venue of record, and all numerical claims in Section~\ref{sec:experiments} were computed by scripts the authors wrote and verified independently by two authors. We take responsibility for the final content of this work, including text, claims or artifacts produced with the aid of generative AI.

\subsection*{Ethics statement}

This work introduces a benchmark containing memory-poisoning attack templates, and releasing such templates lowers the cost of mounting the attacks they describe. We judge the tradeoff to favor release because the attacks we adapt are already public \citep{greshake2023not,chen2024agentpoison,zou2025poisonedrag}, and because a defense cannot be evaluated by anyone who lacks the attacks. We mitigate the residual risk with a tiered release in which the evaluation harness, the defense implementation, and the drift tracks are public, while the attack payload generator is available on request to researchers with institutional affiliation.

Personalization carries its own risks that are not solved by making it more secure. A system that models a person can harden a sparse history into a fixed identity claim, so the reference implementation is configured not to persist hypotheses concerning protected attributes, enforced by a schema constraint on the extractor rather than by a learned behavior, and every ledger node is user-visible, revisable, and deletable. Optimizing for personal utility can also reinforce harmful inclinations, which is why the safety constraint in Section~\ref{sec:select} is architecturally separate from the preference model and cannot be lowered by anything a user or an attacker writes. Ledger contents are sensitive by construction, and our reference implementation stores them locally under pseudonymous identifiers with no cross-user sharing. Human validation of the benchmark was performed by three annotators who were informed of the purpose of the task and compensated at local research rates.

\subsection*{Reproducibility statement}

The formal setting and the informal statement of Theorem~\ref{thm:main} are in Section~\ref{sec:formulation}, with assumptions, formal statements, proofs, equality conditions, and the estimation protocol in Appendix~\ref{app:theory}. Architecture details, decay constants, entropy thresholds, and all training hyperparameters for the extractor and gate are in Appendix~\ref{app:impl}, and the estimation protocol for both total variation distances, including the matched split the bound is evaluated on, is in Appendix~\ref{app:tvest}. Benchmark construction, the seeding procedure from the three source datasets, the track definitions, and the annotation protocol with inter-annotator agreement are in Appendix~\ref{app:bench}. The supervision-matched, adaptive, independently authored, and query-matched evaluations are in Appendix~\ref{app:independent}, and the statistical protocol is in Appendix~\ref{app:stats}. Source code, the \bench{} construction scripts, and the trained adapter weights are included in the supplementary material and will be released publicly.

\bibliography{refs}
\bibliographystyle{plainnat}

\appendix
\section{Formal Setting, Theorems, and Proofs}
\label{app:theory}

\subsection{Setting and notation}

Fix a turn $t$ at which the extractor emits a candidate claim contradicting an existing ledger node $n$. Let $(\mathcal{H},\Sigma_{\mathcal{H}})$ be the measurable space of observable interaction histories up to and including the candidate event, with $H$ a generic element, and let $Z \in \{g,a\}$ be the latent label of the claim in the senses fixed by Assumptions~\ref{as:legit} and~\ref{as:adv}.

\begin{definition}[Restricted view]
\label{def:phi}
Let $\Phi = [0,\infty] \times \mathcal{O}$ carry the product of the Borel and discrete $\sigma$-algebras, where $\mathcal{O} = \{\textrm{user channel}, \textrm{tool output}, \textrm{retrieved document}, \textrm{shared file}\}$. Write $T_t$ for the timestamp of the candidate event and $\ell_t$ for the index of the most recent event that wrote or corroborated $n$, with $r_t = T_t - T_{\ell_t}$ and $r_t = \infty$ by convention when no such event exists. The restricted view is
\begin{equation}
\varphi(H) = ( r_t,\; o_t ),
\end{equation}
with $o_t$ the channel label of the candidate event. Any fixed trust score that is a function of $o_t$ alone is admissible. No history-dependent score is included.
\end{definition}

\begin{assumption}[Measurability of the view]
\label{as:meas}
$\varphi$ is $\Sigma_{\mathcal{H}}/\Sigma_{\Phi}$-measurable. This is an assumption rather than a consequence, because $\ell_t$ is defined through the predicate ``wrote or corroborated $n$'', which is a property of the extractor's annotation of the history and not of its raw text. In our implementation the annotation is produced by a deterministic function of the logged event stream, which makes $\ell_t$ a measurable selection, but a different instantiation of that predicate would require the assumption to be rechecked.
\end{assumption}

Let $P_g$ denote the law of $H$ conditional on $Z=g$, and for an adversary policy $\alpha \in \mathcal{A}$ let $P_\alpha$ denote the law conditional on $Z=a$ under $\alpha$. We write $\lambda = P_g + P_\alpha$, a finite measure dominating both, and $p = \mathrm{d}P_g/\mathrm{d}\lambda$, $q = \mathrm{d}P_\alpha/\mathrm{d}\lambda$, so that $p + q = 1$ holds $\lambda$-almost everywhere. No absolute continuity between $P_g$ and $P_\alpha$ is needed, since everything is expressed through $\lambda$. For any sub-$\sigma$-algebra $\mathcal{G} \subseteq \Sigma_{\mathcal{H}}$ write $P|_{\mathcal{G}}$ for the restriction and
\begin{equation}
\mathrm{TV}_{\mathcal{G}} \;=\; \mathrm{TV}\big(P_g|_{\mathcal{G}}, P_\alpha|_{\mathcal{G}}\big) \;=\; \tfrac{1}{2}\int \big| \mathbb{E}_\lambda[\,p - q \mid \mathcal{G}\,] \big| \,\mathrm{d}\lambda .
\label{eq:tvcond}
\end{equation}
Equation~\ref{eq:tvcond} is a lemma rather than a definition and three of its ingredients are worth naming, since $\lambda$ is not a probability measure. Writing $\lambda = 2m$ for the probability measure $m = \tfrac{1}{2}(P_g+P_\alpha)$ and using the invariance of conditional expectation under rescaling of the measure, existence of $\mathbb{E}_\lambda[\cdot \mid \mathcal{G}]$ and validity of conditional Jensen both reduce to the probability case. The identification $\mathrm{d}P_g|_{\mathcal{G}}/\mathrm{d}\lambda|_{\mathcal{G}} = \mathbb{E}_\lambda[p \mid \mathcal{G}]$ holds because $P_g(A) = \int_A p \,\mathrm{d}\lambda = \int_A \mathbb{E}_\lambda[p \mid \mathcal{G}] \,\mathrm{d}\lambda$ for every $A \in \mathcal{G}$, which is the defining property of conditional expectation. The two cases of interest are $\mathcal{G} = \sigma(\varphi)$ and $\mathcal{G} = \Sigma_{\mathcal{H}}$, abbreviated $\mathrm{TV}_{\varphi}(\alpha)$ and $\mathrm{TV}_{\mathcal{H}}(\alpha)$ with the adversary shown explicitly, since suppressing it is the commonest way to confuse these quantities with $\delta$.

\begin{definition}[Rules and error rates]
\label{def:rule}
A rule measurable with respect to $\mathcal{G}$ is a $\mathcal{G}$-measurable map $d : \mathcal{H} \to [0,1]$, where $d(H)$ is the probability of accepting the update. Values in the open interval cover randomized rules, and since both error rates below are linear in $d$, no further randomization enlarges the achievable set. The error rates are
\begin{equation}
\mathrm{FUR}(d;\alpha) = \int d \,\mathrm{d}P_{\alpha},
\qquad
\mathrm{RRL}(d) = 1 - \int d \,\mathrm{d}P_{g}.
\end{equation}
By the Doob-Dynkin lemma a $\sigma(\varphi)$-measurable $d$ factors as $d = \tilde{d} \circ \varphi$ for a measurable $\tilde{d} : \Phi \to [0,1]$, so rules on the restricted view and $\sigma(\varphi)$-measurable rules on $\mathcal{H}$ are the same object.
\end{definition}

\subsection{Assumptions}

\begin{assumption}[Restricted access]
\label{as:access}
The rule under analysis is $\sigma(\varphi)$-measurable and may not condition on any other function of $H$.
\end{assumption}

\begin{assumption}[Legitimate update]
\label{as:legit}
$Z=g$ holds exactly when the user's intended preference has genuinely changed, the change affects the stable, contextual, or transient component that $n$ encodes, the user expresses the change at turn $t$ or confirms it subsequently, and an oracle ledger would add, revise, or rescope $n$ in consequence. Unexpressed latent changes are excluded, as are typos, sarcasm, and unresolved ambiguity, which are treated in Theorem~\ref{thm:clarify}.
\end{assumption}

\begin{assumption}[Adversary]
\label{as:adv}
The adversary observes the visible transcript, its timestamps, and the visible source labels. Following Kerckhoffs's principle it knows the architecture, algorithm, hyperparameters, and released adapter weights. It does not observe the private ledger, the latent user state, the gate's current confidence, or the random seeds. The clarification threshold $\eta_{\mathrm{clr}}=1.24$ is a fixed hyperparameter and is therefore known to it. It may choose the timing of an injection among the opportunities at which it controls or influences content, may choose among compromised tools, retrieved documents, and shared files, may choose arbitrary surface wording subject to normal length and safety constraints, and may adapt using the visible transcript. It cannot directly impersonate an authenticated user and it cannot corrupt the answer returned by a clarifying question. We further assume that $P_g$ does not depend on $\alpha$. This is a real restriction and generation order does not secure it, since injected content inserted earlier in an episode changes the history preceding a later genuine claim. It holds only when no episode contributing to $P_g$ also contains injected content, which is exactly how the matched split of Appendix~\ref{app:tvest} is constructed and is why the empirical statements are made on that split rather than on the T3 track as a whole. Write $\mathcal{A}$ for the feasible set, assumed non-empty, and
\begin{equation}
\delta_{\mathcal{G}} \;=\; \inf_{\alpha \in \mathcal{A}} \mathrm{TV}\big(P_g|_{\mathcal{G}}, P_\alpha|_{\mathcal{G}}\big).
\end{equation}
\end{assumption}

\subsection{Part 1. The testing lower bound}

\begin{theorem}[Impossibility]
\label{thm:lb}
Let $\mathcal{G} \subseteq \Sigma_{\mathcal{H}}$ be any sub-$\sigma$-algebra. The statement uses only Assumption~\ref{as:meas} and the definitions; Assumption~\ref{as:access} is the special case $\mathcal{G} = \sigma(\varphi)$ that the paper applies it to, and Assumptions~\ref{as:legit} and~\ref{as:adv} fix what $P_g$ and $P_\alpha$ denote without entering the proof. For every $\mathcal{G}$-measurable rule $d$ and every $\alpha \in \mathcal{A}$,
\begin{equation}
\mathrm{FUR}(d;\alpha) + \mathrm{RRL}(d) \;\ge\; 1 - \mathrm{TV}\big(P_g|_{\mathcal{G}}, P_\alpha|_{\mathcal{G}}\big),
\label{eq:pointwise}
\end{equation}
with equality for the likelihood-ratio rule $d^{\star}_{\alpha} = \mathbf{1}\{ \mathbb{E}_\lambda[p \mid \mathcal{G}] > \mathbb{E}_\lambda[q \mid \mathcal{G}] \}$. Consequently, for every $\mathcal{G}$-measurable $d$,
\begin{equation}
\sup_{\alpha \in \mathcal{A}} \Big[ \mathrm{FUR}(d;\alpha) + \mathrm{RRL}(d) \Big] \;\ge\; 1 - \delta_{\mathcal{G}} .
\label{eq:worstcase}
\end{equation}
Taking $\mathcal{G} = \sigma(\varphi)$ gives the statement quoted in Section~\ref{sec:formulation}.
\end{theorem}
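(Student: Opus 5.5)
The plan is the standard Le Cam two-point argument, carried out on the restricted $\sigma$-algebra. Fix $\mathcal{G}$, a $\mathcal{G}$-measurable rule $d$ and an adversary $\alpha\in\mathcal{A}$. Write $\bar p = \mathbb{E}_\lambda[p\mid\mathcal{G}]$ and $\bar q = \mathbb{E}_\lambda[q\mid\mathcal{G}]$. By the identification recorded after Equation~\ref{eq:tvcond}, these are the densities of $P_g|_{\mathcal{G}}$ and $P_\alpha|_{\mathcal{G}}$ with respect to $\lambda|_{\mathcal{G}}$. Since $p+q=1$ $\lambda$-a.e., linearity of conditional expectation gives $\bar p+\bar q=1$ $\lambda$-a.e. as well.

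The first step replaces integrals against $P_g$ and $P_\alpha$ by integrals of $d$ against $\bar p,\bar q$. Because $d$ is $\mathcal{G}$-measurable and bounded, the defining property of conditional expectation (in its extended form for bounded $\mathcal{G}$-measurable integrands) yields
\[
\int d\,\mathrm{d}P_\alpha=\int d\,q\,\mathrm{d}\lambda=\int d\,\bar q\,\mathrm{d}\lambda
\]
and likewise $\int d\,\mathrm{d}P_g=\int d\,\bar p\,\mathrm{d}\lambda$. Hence
\[
\mathrm{FUR}(d;\alpha)+\mathrm{RRL}(d)=1-\int d\,(\bar p-\bar q)\,\mathrm{d}\lambda .
\]
Since $0\le d\le 1$, the integral is at most $\int(\bar p-\bar q)^+\,\mathrm{d}\lambda$, with equality when $d=\mathbf{1}\{\bar p>\bar q\}$, which is $d^\star_\alpha$. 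This rule is $\mathcal{G}$-measurable, so it is admissible.

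The second step identifies $\int(\bar p-\bar q)^+\,\mathrm{d}\lambda$ with $\mathrm{TV}_{\mathcal{G}}$. Both $P_g|_{\mathcal{G}}$ and $P_\alpha|_{\mathcal{G}}$ are probability measures, so $\int\bar p\,\mathrm{d}\lambda=\int\bar q\,\mathrm{d}\lambda=1$. The positive and negative parts of $\bar p-\bar q$ therefore have equal integrals, and each equals $\tfrac12\int|\bar p-\bar q|\,\mathrm{d}\lambda$, which is Equation~\ref{eq:tvcond}. This gives Equation~\ref{eq:pointwise} with equality at $d^\star_\alpha$.

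For Equation~\ref{eq:worstcase}, bound the supremum over $\alpha$ below by the value at any $\alpha$. This gives $\sup_\alpha[\cdots]\ge 1-\mathrm{TV}_{\mathcal{G}}(\alpha)$ for every $\alpha$, and taking the supremum of the right side over $\alpha$ yields $1-\inf_\alpha \mathrm{TV}_{\mathcal{G}}(\alpha)=1-\delta_{\mathcal{G}}$. No attainment of the infimum is needed, and non-emptiness of $\mathcal{A}$ ensures the quantities are defined. Specialising to $\mathcal{G}=\sigma(\varphi)$, which is a sub-$\sigma$-algebra by Assumption~\ref{as:meas}, and invoking the Doob--Dynkin factorisation of Definition~\ref{def:rule}, recovers the statement of Section~\ref{sec:formulation} for rules written as $\tilde d\circ\varphi$.

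The main obstacle is bookkeeping rather than any inequality. One must confirm that conditional expectation with respect to the non-probability measure $\lambda$ behaves as claimed, and this is handled by the rescaling $\lambda=2m$ already noted in the setting. One must also check that $\mathbb{E}_\lambda[d\,q]=\mathbb{E}_\lambda[d\,\bar q]$ holds for $[0,1]$-valued $\mathcal{G}$-measurable $d$, not only for indicators. I would obtain this from the indicator case by approximating $d$ with simple functions and applying monotone convergence.
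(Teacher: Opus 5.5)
Your proof is correct and matches the paper's argument: both write $\mathrm{FUR}+\mathrm{RRL} = 1 - \int d\,\mathrm{d}(P_g - P_\alpha)|_{\mathcal{G}}$, bound the integral by the positive part of that signed measure, use the zero total mass to identify that positive part with $\mathrm{TV}_{\mathcal{G}}$, and pass from the fixed-$\alpha$ bound to the supremum via $\sup_\alpha[1-\mathrm{TV}] = 1-\delta_{\mathcal{G}}$. The only difference is language: the paper invokes Hahn--Jordan on $\mu = (P_g - P_\alpha)|_{\mathcal{G}}$, while you work with the densities $\bar p - \bar q$, which has the minor advantage of checking directly that the stated rule $\mathbf{1}\{\bar p > \bar q\}$ attains equality.
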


\begin{proof}
Fix $\alpha$ and let $\mu = (P_g - P_\alpha)|_{\mathcal{G}}$, a finite signed measure with $\mu(\mathcal{H}) = 0$. By Definition~\ref{def:rule},
\begin{equation}
\mathrm{FUR}(d;\alpha) + \mathrm{RRL}(d) = 1 + \int d \,\mathrm{d}P_{\alpha} - \int d \,\mathrm{d}P_{g} = 1 - \int d \,\mathrm{d}\mu ,
\end{equation}
where the integrals depend on $d$ only through its $\mathcal{G}$-restriction. Since $d$ is measurable with $0 \le d \le 1$,
\begin{equation}
\int d \,\mathrm{d}\mu \;\le\; \sup_{\substack{f \ \mathcal{G}\textrm{-meas.} \\ 0 \le f \le 1}} \int f \,\mathrm{d}\mu \;=\; \mu^{+}(\mathcal{H}) \;=\; \mathrm{TV}\big(P_g|_{\mathcal{G}}, P_\alpha|_{\mathcal{G}}\big),
\end{equation}
in which the first equality is Hahn-Jordan, the supremum being attained at the indicator of the Hahn positive set, and the second uses $\mu(\mathcal{H}) = 0$, which is what makes $\sup_{A} \mu(A)$ equal to $\tfrac{1}{2}\|\mu\|_1$ and hence to the total variation distance between two probability measures. Substituting gives Equation~\ref{eq:pointwise}, and the maximizing indicator is $d^{\star}_{\alpha}$, giving equality. For Equation~\ref{eq:worstcase}, note that Equation~\ref{eq:pointwise} holds for every $\alpha$, so taking suprema on both sides and using $\sup_\alpha [1 - \mathrm{TV}] = 1 - \inf_\alpha \mathrm{TV}$ yields the claim.
\end{proof}

Two caveats belong with this theorem and we state them rather than leaving them to a reader.

\begin{remark}[The worst-case bound is a maximin value, not a value]
\label{rem:minimax}
Equality in Equation~\ref{eq:pointwise} is attained by $d^{\star}_{\alpha}$, which depends on $\alpha$ and is therefore unavailable to a defender who does not know the adversary's policy. What Theorem~\ref{thm:lb} establishes is
$\inf_d \sup_\alpha [\mathrm{FUR}+\mathrm{RRL}] \ge \sup_\alpha \inf_d [\cdot] = 1 - \delta_{\mathcal{G}}$,
and the first inequality may be strict. We do not assume $\mathcal{A}$ convex or compact and prove no minimax theorem, so Equation~\ref{eq:worstcase} should be read as a lower bound on the worst-case risk and not as the exact minimax value. Nor is the infimum defining $\delta_{\mathcal{G}}$ assumed attained.
\end{remark}

\begin{remark}[Choice of dominating measure]
A common dominating measure always exists, since $\lambda = P_g + P_\alpha$ is finite. The rule $d^{\star}_{\alpha}$ does not depend on which one is chosen, because for any dominating $\nu$ the set $\{\mathrm{d}P_g/\mathrm{d}\nu > \mathrm{d}P_\alpha/\mathrm{d}\nu\}$ agrees with $\{p > q\}$ up to a $\lambda$-null set, and altering $d$ on a $\lambda$-null set changes neither $\int d \,\mathrm{d}P_g$ nor $\int d \,\mathrm{d}P_\alpha$.
\end{remark}

\begin{corollary}[Bayes form]
\label{cor:bayes}
Let $Z=a$ have prior $\pi \in (0,1)$ under zero-one loss. The Bayes risk over $\mathcal{G}$-measurable rules is
\begin{equation}
R^{\star}_{\pi} = (1-\pi) - \nu^{+}(\mathcal{H}),
\qquad \nu = \big( (1-\pi) P_g - \pi P_\alpha \big)\big|_{\mathcal{G}},
\end{equation}
and at $\pi = \tfrac{1}{2}$ this reduces to $R^{\star} = \tfrac{1}{2}( 1 - \mathrm{TV}(P_g|_{\mathcal{G}}, P_\alpha|_{\mathcal{G}}) )$.
\end{corollary}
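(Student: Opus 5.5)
The plan repeats the Hahn-Jordan argument from Theorem~\ref{thm:lb}, now on the prior-weighted signed measure. Under zero-one loss, with $d$ the acceptance probability, a $\mathcal{G}$-measurable rule errs on an adversarial claim with probability $\mathrm{FUR}(d;\alpha)=\int d\,\mathrm{d}P_\alpha$, weighted by $\pi$. It errs on a genuine claim with probability $\mathrm{RRL}(d)=1-\int d\,\mathrm{d}P_g$, weighted by $1-\pi$. The Bayes risk of $d$ is therefore
\begin{equation*}
R_\pi(d) = \pi\int d\,\mathrm{d}P_\alpha + (1-\pi)\Big(1-\int d\,\mathrm{d}P_g\Big) = (1-\pi) - \int d\,\mathrm{d}\nu ,
\end{equation*}
with $\nu = ((1-\pi)P_g - \pi P_\alpha)|_{\mathcal{G}}$. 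As in the proof of Theorem~\ref{thm:lb}, both integrals depend on $d$ only through its values as a $\mathcal{G}$-measurable function, so working with the restricted measures is legitimate.

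Minimizing $R_\pi$ over $\mathcal{G}$-measurable $d$ with $0\le d\le 1$ is the same as maximizing $\int d\,\mathrm{d}\nu$. Take a Hahn decomposition $\mathcal{H}=A^+\cup A^-$ of the finite signed measure $\nu$ on $(\mathcal{H},\mathcal{G})$. Then
\begin{equation*}
\int d\,\mathrm{d}\nu \le \int_{A^+} d\,\mathrm{d}\nu \le \nu(A^+) = \nu^+(\mathcal{H}),
\end{equation*}
and equality holds for $d=\mathbf{1}_{A^+}$, which is $\mathcal{G}$-measurable. Hence $R^\star_\pi = (1-\pi)-\nu^+(\mathcal{H})$, and the minimizer is attained. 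In density form, $A^+$ may be taken to be $\{(1-\pi)\mathbb{E}_\lambda[p\mid\mathcal{G}] > \pi\,\mathbb{E}_\lambda[q\mid\mathcal{G}]\}$. This uses the identification of restricted densities stated after Equation~\ref{eq:tvcond}, and it gives the weighted likelihood-ratio rule. The remark on the dominating measure shows this choice is immaterial.

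For $\pi=\tfrac12$ we have $\nu=\tfrac12(P_g-P_\alpha)|_{\mathcal{G}}=\tfrac12\mu$ with $\mu$ as in Theorem~\ref{thm:lb}. Since $\mu(\mathcal{H})=0$, the step already used in that proof gives $\mu^+(\mathcal{H})=\mathrm{TV}(P_g|_{\mathcal{G}},P_\alpha|_{\mathcal{G}})$. Therefore $R^\star=\tfrac12-\tfrac12\mathrm{TV}$, as claimed.

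The only point that needs care is the general-$\pi$ case. There $\nu(\mathcal{H})=1-2\pi$ is nonzero, so $\nu^+(\mathcal{H})$ is not a total variation distance and the formula must be left in terms of $\nu^+$, exactly as the statement does. One can optionally check consistency via $\nu^+(\mathcal{H})=\tfrac12(\|\nu\|+1-2\pi)$. Everything else is routine.
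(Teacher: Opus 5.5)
Your proposal is correct and takes the same route as the paper's proof: you rewrite the risk as $R_\pi(d) = (1-\pi) - \int d\,\mathrm{d}\nu$, maximize $\int d\,\mathrm{d}\nu$ over $0\le d\le 1$ by Hahn--Jordan, and note that $\nu(\mathcal{H}) = 1-2\pi$ rules out the total-variation identification except at $\pi=\tfrac12$, where $\nu=\tfrac12\mu$. The extra details you spell out, namely attainment by $\mathbf{1}_{A^+}$ and the weighted likelihood-ratio form of $A^+$, are consistent with the paper, which leaves them implicit.
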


\begin{proof}
$R_\pi(d) = \pi \int d \,\mathrm{d}P_\alpha + (1-\pi)\int (1-d)\,\mathrm{d}P_g = (1-\pi) - \int d \,\mathrm{d}\nu$, and minimizing over $0 \le d \le 1$ maximizes the integral, which equals $\nu^{+}(\mathcal{H})$ by Hahn-Jordan. Note that for $\pi \ne \tfrac{1}{2}$ the signed measure $\nu$ has total mass $1 - 2\pi \ne 0$, so it is not a difference of probability measures and the identification of $\nu^{+}$ with a total variation distance is unavailable; only the Hahn-Jordan half of the argument carries over. At $\pi = \tfrac{1}{2}$ we have $\nu = \tfrac{1}{2}(P_g - P_\alpha)|_{\mathcal{G}}$, so $\nu^{+}(\mathcal{H}) = \tfrac{1}{2}\mathrm{TV}$ and $R^{\star} = \tfrac{1}{2} - \tfrac{1}{2}\mathrm{TV}$.
\end{proof}

\subsection{Part 2. Conditioning on the history}

\begin{theorem}[Data processing, with equality condition]
\label{thm:dpi}
For every $\alpha \in \mathcal{A}$ and every pair of sub-$\sigma$-algebras $\mathcal{G} \subseteq \mathcal{G}' \subseteq \Sigma_{\mathcal{H}}$,
\begin{equation}
\mathrm{TV}\big(P_g|_{\mathcal{G}}, P_\alpha|_{\mathcal{G}}\big) \;\le\; \mathrm{TV}\big(P_g|_{\mathcal{G}'}, P_\alpha|_{\mathcal{G}'}\big),
\end{equation}
and consequently $\delta_{\mathcal{G}} \le \delta_{\mathcal{G}'}$. The following two statements concern the displayed inequality at a fixed $\alpha$ and not the inequality between the infima, which may be an equality even when every $\alpha$ gives a strict gap. Equality at $\alpha$ holds if and only if
\begin{equation}
\big| \mathbb{E}_\lambda[\, p - q \mid \mathcal{G} \,] \big| \;=\; \mathbb{E}_\lambda\big[\, \big| \mathbb{E}_\lambda[p-q \mid \mathcal{G}'] \big| \;\big|\; \mathcal{G} \,\big]
\qquad \lambda\textrm{-a.e.},
\label{eq:eqcond}
\end{equation}
that is, if and only if $\mathbb{E}_\lambda[p-q \mid \mathcal{G}']$ has an almost surely constant sign conditionally on $\mathcal{G}$. The displayed inequality at $\alpha$ is therefore strict exactly when
\begin{equation}
\lambda\Big( \Pr\nolimits_\lambda\big( \mathbb{E}_\lambda[p-q \mid \mathcal{G}'] > 0 \;\big|\; \mathcal{G} \big) > 0 \;\textrm{ and }\; \Pr\nolimits_\lambda\big( \mathbb{E}_\lambda[p-q \mid \mathcal{G}'] < 0 \;\big|\; \mathcal{G} \big) > 0 \Big) \;>\; 0 .
\label{eq:strict}
\end{equation}
Separately, every $\mathcal{G}$-measurable rule is $\mathcal{G}'$-measurable, so the achievable set of $(\mathrm{FUR},\mathrm{RRL})$ pairs under $\mathcal{G}$ is a subset of that under $\mathcal{G}'$ and no rule on the coarser view can outperform the best rule on the finer one.
\end{theorem}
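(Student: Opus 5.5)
The whole argument runs through the conditional-expectation formula of Equation~\ref{eq:tvcond}. Fix $\alpha$ and write $D' = \mathbb{E}_\lambda[p-q \mid \mathcal{G}']$. Because $\mathcal{G} \subseteq \mathcal{G}'$, the tower property gives $\mathbb{E}_\lambda[p-q \mid \mathcal{G}] = \mathbb{E}_\lambda[D' \mid \mathcal{G}]$. All conditional expectations are taken under $\lambda = 2m$, so they coincide with those under the probability measure $m$, and the tower property and conditional Jensen may be used exactly as in the probability case.

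\textbf{Inequality.} Apply conditional Jensen to $|\cdot|$:
\[
|\mathbb{E}_\lambda[D' \mid \mathcal{G}]| \le \mathbb{E}_\lambda[|D'| \mid \mathcal{G}] \qquad \lambda\text{-a.e.}
\]
Integrate against $\lambda$ and use $\int \mathbb{E}_\lambda[\,\cdot \mid \mathcal{G}]\,\mathrm{d}\lambda = \int \cdot\,\mathrm{d}\lambda$. This gives $2\,\mathrm{TV}_{\mathcal{G}} \le \int |D'|\,\mathrm{d}\lambda = 2\,\mathrm{TV}_{\mathcal{G}'}$. Since the bound holds for every $\alpha \in \mathcal{A}$, taking the infimum over $\alpha$ on both sides yields $\delta_{\mathcal{G}} \le \delta_{\mathcal{G}'}$.

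\textbf{Equality condition.} Set $W = \mathbb{E}_\lambda[|D'| \mid \mathcal{G}] - |\mathbb{E}_\lambda[D' \mid \mathcal{G}]|$. Then $W \ge 0$ $\lambda$-a.e., and $\mathrm{TV}_{\mathcal{G}'} - \mathrm{TV}_{\mathcal{G}} = \tfrac12 \int W\,\mathrm{d}\lambda$. Hence equality holds iff $W = 0$ $\lambda$-a.e., which is exactly Equation~\ref{eq:eqcond}.

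\textbf{Sign characterization.} This is the step I expect to be the main technical obstacle, because it needs regular conditional distributions. $D'$ is real-valued on a standard space, so a regular conditional distribution $\kappa(\omega,\cdot)$ of $D'$ given $\mathcal{G}$ exists. For $\lambda$-a.e.\ $\omega$ the conditional expectations are the integrals of $x$ and $|x|$ against $\kappa(\omega,\cdot)$. For a single probability measure, $|\int x\,\mathrm{d}\kappa| = \int |x|\,\mathrm{d}\kappa$ holds iff $\kappa$ puts no mass on $\{x>0\}$ or no mass on $\{x<0\}$. To see this, write $\int x^+ = A$ and $\int x^- = B$; the two sides are $|A-B|$ and $A+B$, which agree iff $\min(A,B)=0$, and $A=0$ forces $\kappa(x>0)=0$. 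So $W(\omega)>0$ exactly when both conditional probabilities in Equation~\ref{eq:strict} are positive. Strictness of the inequality is therefore equivalent to $\lambda(W>0)>0$, which is Equation~\ref{eq:strict}.

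If disintegration is to be avoided, the same conclusion can be reached by working directly with $\mathbb{E}_\lambda[D'^{+} \mid \mathcal{G}]$ and $\mathbb{E}_\lambda[D'^{-} \mid \mathcal{G}]$. One notes $W = 2\min\bigl(\mathbb{E}[D'^+ \mid \mathcal{G}],\, \mathbb{E}[D'^- \mid \mathcal{G}]\bigr)$. The event $\{\mathbb{E}[D'^+ \mid \mathcal{G}] > 0\}$ agrees $\lambda$-a.e.\ with $\{\Pr(D'>0 \mid \mathcal{G}) > 0\}$: on any $\mathcal{G}$-set where one vanishes, integrating shows the other vanishes too.

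\textbf{Achievable sets.} Any $\mathcal{G}$-measurable $d$ is also $\mathcal{G}'$-measurable, and $(\mathrm{FUR},\mathrm{RRL})$ depends only on $d$ itself. So the achievable set under $\mathcal{G}$ is contained in the one under $\mathcal{G}'$, and the optimal error over the finer view is no larger. Combined with Theorem~\ref{thm:lb}, this matches the inequality between the TV terms.
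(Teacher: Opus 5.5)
Your proof is correct and follows the paper's own route: the tower property, conditional Jensen for $|\cdot|$, integration against $\lambda$, and the observation that a nonnegative integrated Jensen gap is zero iff it vanishes $\lambda$-a.e. The only difference is that you justify the sign characterization, either by disintegration or via the identity $W = 2\min\bigl(\mathbb{E}_\lambda[(D')^{+}\mid\mathcal{G}],\,\mathbb{E}_\lambda[(D')^{-}\mid\mathcal{G}]\bigr)$, whereas the paper simply asserts that $|\mathbb{E}[X\mid\mathcal{G}]| = \mathbb{E}[|X|\mid\mathcal{G}]$ holds iff $X$ has a.s.\ constant sign given $\mathcal{G}$.
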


\begin{proof}
Applying Equation~\ref{eq:tvcond} at each level and using the tower property, $\mathbb{E}_\lambda[p-q \mid \mathcal{G}] = \mathbb{E}_\lambda[\,\mathbb{E}_\lambda[p-q \mid \mathcal{G}'] \mid \mathcal{G}\,]$. Conditional Jensen applied to the convex function $x \mapsto |x|$ gives
\begin{equation}
\big| \mathbb{E}_\lambda[p-q \mid \mathcal{G}] \big| \;\le\; \mathbb{E}_\lambda\big[\, \big| \mathbb{E}_\lambda[p-q \mid \mathcal{G}'] \big| \;\big|\; \mathcal{G} \,\big] \qquad \lambda\textrm{-a.e.},
\end{equation}
and integrating against $\lambda$ and halving gives the inequality, since the outer expectation integrates to $\int |\mathbb{E}_\lambda[p-q \mid \mathcal{G}']| \,\mathrm{d}\lambda$. Monotonicity of the infimum then gives $\delta_{\mathcal{G}} \le \delta_{\mathcal{G}'}$. Equality in an integrated Jensen inequality holds if and only if it holds pointwise almost everywhere, which is Equation~\ref{eq:eqcond}, and equality in $|\mathbb{E}[X \mid \mathcal{G}]| = \mathbb{E}[|X| \mid \mathcal{G}]$ holds if and only if $X$ has almost surely constant sign given $\mathcal{G}$, which is the negation of Equation~\ref{eq:strict}. The final sentence is immediate, since a $\mathcal{G}$-measurable $d$ is $\mathcal{G}'$-measurable with the same $\mathrm{FUR}$ and $\mathrm{RRL}$.
\end{proof}

\begin{remark}[Why the condition is stated conditionally]
\label{rem:fibre}
It is tempting to phrase strictness as the existence of two positive-measure sets inside one fibre $\varphi^{-1}(r,o)$ on opposite sides of the likelihood ratio. That phrasing is nearly vacuous here. Away from the atom at $r_t = \infty$ that Definition~\ref{def:phi} introduces for never-corroborated nodes, the $r_t$-marginal is non-atomic in our data, so every individual fibre is $\lambda$-null and no such pair of sets exists. Equation~\ref{eq:strict} does not degenerate, because it quantifies over a positive-measure set of views through the conditional law rather than over a single fibre.
\end{remark}

Equation~\ref{eq:strict} is the formal content of the claim that the trajectory carries information the restricted view discards. Two events arriving the same number of days after the last corroboration and on the same channel can differ in whether the belief they contradict was itself recently revised, whether the contradiction repeats a pattern the user has shown before, and whether corroborating evidence appeared in between. When those differences move the conditional likelihood ratio to opposite signs on a positive-measure set of views, the inequality is strict.

\subsection{Part 3. Clarification}

Theorem~\ref{thm:lb} constrains rules that must decide. It says nothing about a rule that may decline and ask, because asking changes the available information rather than the map applied to it. Making that precise requires the answer, its accuracy, and the cost to be defined.

\begin{assumption}[Clarification channel]
\label{as:clarify}
The agent may take a third action $\mathsf{ask}$, which incurs a fixed cost $\kappa > 0$ in the units of the zero-one loss and returns a label $\tilde{Z}$ through a binary symmetric channel, meaning
\begin{equation}
\Pr(\tilde{Z} \ne Z \mid Z = g) = \Pr(\tilde{Z} \ne Z \mid Z = a) = \varepsilon \in [0,\tfrac{1}{2}),
\end{equation}
with $\tilde{Z}$ conditionally independent of $H$ given $Z$. Symmetry is required and not cosmetic, since an asymmetric channel makes the ask-branch loss depend on the posterior and the analysis below fails.
\end{assumption}

\begin{theorem}[Clarification strictly enlarges the achievable region]
\label{thm:clarify}
Fix $\alpha \in \mathcal{A}$ and a sub-$\sigma$-algebra $\mathcal{G}$, adopt equal priors, write $m = \tfrac{1}{2}(P_g + P_\alpha)$, and let $\zeta = \mathbb{E}_\lambda[q \mid \mathcal{G}]$, which under equal priors is the posterior $\Pr(Z=a \mid \mathcal{G})$ in the joint law whose $H$-marginal is $m$ and whose label marginal is uniform. Let $e = \min\{\zeta, 1-\zeta\}$ be the pointwise conditional error of the best no-query decision. Consider policies that ask on a $\mathcal{G}$-measurable set and, having asked, act on $\tilde{Z}$. Within that class the optimal policy asks on
\begin{equation}
Q_{\kappa} = \{ e > \kappa + \varepsilon \},
\end{equation}
in the sense that some optimal policy asks exactly there, every optimal policy asks $m$-a.e. on $\{e > \kappa+\varepsilon\}$ and does not ask on $\{e < \kappa+\varepsilon\}$, and ties on $\{e = \kappa+\varepsilon\}$ may be broken arbitrarily. Its risk is
\begin{equation}
R^{\star}_{\kappa} = \mathbb{E}_{m}\big[ \min\{ e,\; \kappa + \varepsilon \} \big] \;\le\; R^{\star} = \mathbb{E}_{m}[e] = \tfrac{1}{2}\big(1 - \mathrm{TV}(P_g|_{\mathcal{G}},P_\alpha|_{\mathcal{G}})\big),
\end{equation}
with strict inequality if and only if $m(Q_{\kappa}) > 0$. In particular, always asking beats every no-query rule when
\begin{equation}
\kappa + \varepsilon \;<\; \tfrac{1}{2}\big( 1 - \mathrm{TV}(P_g|_{\mathcal{G}},P_\alpha|_{\mathcal{G}}) \big),
\label{eq:alwaysask}
\end{equation}
and this conclusion is not restricted to the policy class above, since the unrestricted optimum is no worse than $R^{\star}_{\kappa}$.
\end{theorem}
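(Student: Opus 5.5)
The plan is to reduce everything to a pointwise comparison of conditional losses given $\mathcal{G}$, and then integrate against $m$. Work in the joint law $\mathbb{P}$ on $(H,Z,\tilde Z)$ with uniform label marginal, $H \mid Z=g \sim P_g$, $H \mid Z=a \sim P_\alpha$, and $\tilde Z$ generated by the channel of Assumption~\ref{as:clarify}. The $H$-marginal is then $m$. Because $\lambda = 2m$ and $p+q=1$, we get $\Pr(Z=a \mid H) = q$. By the tower property, $\Pr(Z=a\mid\mathcal{G}) = \mathbb{E}_m[q\mid\mathcal{G}] = \mathbb{E}_\lambda[q\mid\mathcal{G}] = \zeta$, using the rescaling invariance noted after Equation~\ref{eq:tvcond}. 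A policy in the stated class is a $\mathcal{G}$-measurable ask indicator $s \in [0,1]$ (randomized asking is allowed, since the loss is linear in $s$) together with a $\mathcal{G}$-measurable accept probability $d$ used when not asking. Having asked, the policy accepts iff $\tilde Z = g$.

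\textbf{Step 1: the ask-branch loss.} Conditional independence of $\tilde Z$ and $H$ given $Z$, together with channel symmetry, gives
\[
\Pr(\tilde Z \ne Z \mid \mathcal{G}) = (1-\zeta)\varepsilon + \zeta\varepsilon = \varepsilon .
\]
So asking costs exactly $\kappa+\varepsilon$ pointwise, independently of the posterior. This is where symmetry is used.

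\textbf{Step 2: the no-ask loss.} The conditional loss is $d(1-\zeta)\cdot 0 + \ldots$; more directly, $\mathbb{E}[\text{loss}\mid\mathcal{G}] = d\,\zeta + (1-d)(1-\zeta)$. This is minimized by the indicator $\mathbf{1}\{\zeta<1/2\}$, with minimum value $e$.

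\textbf{Step 3: combine and optimize.} The total conditional risk is $s(\kappa+\varepsilon) + (1-s)\,[d\zeta + (1-d)(1-\zeta)] \ge \min\{e,\kappa+\varepsilon\}$ pointwise. Equality holds iff $s=1$ a.e. where $e>\kappa+\varepsilon$, $s=0$ where $e<\kappa+\varepsilon$ (with $d$ Bayes there), and $s$ is arbitrary on ties. Integrating against $m$ gives $R^\star_\kappa$.

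The "every optimal policy" claim follows from a nonnegativity argument. The excess pointwise risk is nonnegative, and its integral vanishes iff it vanishes $m$-a.e. On $\{e>\kappa+\varepsilon\}$ the excess is at least $(1-s)(e-\kappa-\varepsilon)$, which forces $s=1$ a.e.; the analogous bound forces $s=0$ on $\{e<\kappa+\varepsilon\}$.

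\textbf{Step 4: identify $R^\star$ and prove strictness.} We have $R^\star = \mathbb{E}_m[e]$, and this equals $\tfrac12(1-\mathrm{TV}_{\mathcal{G}})$ by Corollary~\ref{cor:bayes} at $\pi=\tfrac12$. Alternatively, use $e = \tfrac12 - \tfrac12|1-2\zeta|$ together with Equation~\ref{eq:tvcond}, noting $1-2\zeta = \mathbb{E}_\lambda[p-q\mid\mathcal{G}]$ and $\int\cdot\,dm = \tfrac12\int\cdot\,d\lambda$.

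Strictness follows because $e - \min\{e,\kappa+\varepsilon\} = (e-\kappa-\varepsilon)^+$. This is strictly positive exactly on $Q_\kappa$, so its $m$-integral is positive iff $m(Q_\kappa)>0$.

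\textbf{Step 5: the always-ask claim.} Always asking has risk $\kappa+\varepsilon$. When Equation~\ref{eq:alwaysask} holds, this is below $R^\star$, which is the infimum over no-query $\mathcal{G}$-measurable rules. The unrestricted optimum can do no worse than this restricted policy, which gives the final sentence.

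The main obstacle I expect is Step 1, specifically the measure-theoretic bookkeeping. We must justify that $\Pr(Z=a\mid\mathcal{G})$ equals $\mathbb{E}_\lambda[q\mid\mathcal{G}]$ in the constructed joint law, and that conditioning on $\mathcal{G}\subseteq\sigma(H)$ preserves the channel's conditional independence. I would handle this by building $\mathbb{P}$ explicitly as $m(dH)\,\Pr(Z\mid H)\,\Pr(\tilde Z\mid Z)$ and verifying both facts by the defining property of conditional expectation on sets $A\in\mathcal{G}$.
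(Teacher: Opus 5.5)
Your proposal is correct and follows the paper's proof step for step. Symmetry of the channel makes the ask branch cost the constant $\kappa+\varepsilon$ given $\mathcal{G}$, the risk is then minimized pointwise over the three actions, strictness comes from $e-\min\{e,\kappa+\varepsilon\}=(e-\kappa-\varepsilon)^{+}$ being positive exactly on $Q_\kappa$, and the constant ask policy witnesses the final claim. Parametrizing by $(s,d)$ is equivalent to the paper's $\rho$ into $\Delta(\{\mathsf{acc},\mathsf{rej},\mathsf{ask}\})$, and your excess-risk argument for the ``every optimal policy'' clause and your direct computation of $\mathbb{E}_m[e]$ via Equation~\ref{eq:tvcond} spell out steps the paper compresses.
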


\begin{proof}
A policy is a $\mathcal{G}$-measurable $\rho$ into $\Delta(\{\mathsf{acc},\mathsf{rej},\mathsf{ask}\})$. Conditionally on $\mathcal{G}$, choosing $\mathsf{acc}$ costs $\zeta$ and choosing $\mathsf{rej}$ costs $1-\zeta$. Choosing $\mathsf{ask}$ and acting on $\tilde{Z}$ costs $\kappa$ plus the probability of error, which is
$\Pr(\tilde{Z} \ne Z \mid \mathcal{G}) = \zeta\,\varepsilon + (1-\zeta)\,\varepsilon = \varepsilon$
by the symmetry in Assumption~\ref{as:clarify}, and is therefore constant in $\mathcal{G}$. Hence
$R(\rho) = \mathbb{E}_m[\rho_{\mathsf{acc}}\zeta + \rho_{\mathsf{rej}}(1-\zeta) + \rho_{\mathsf{ask}}(\kappa+\varepsilon)]$,
an integral of a conditional expectation, minimized by placing mass pointwise on an arg-min of $\{\zeta, 1-\zeta, \kappa+\varepsilon\}$, which attains $\mathbb{E}_m[\min\{e, \kappa+\varepsilon\}]$ and describes the ask set up to ties as stated. The minimum differs from $e$ exactly on $Q_\kappa$, giving strictness if and only if $m(Q_\kappa) > 0$. The identity $\mathbb{E}_m[e] = \tfrac{1}{2}(1-\mathrm{TV})$ follows by the same pointwise minimization applied to the two-action problem, which is the computation behind Corollary~\ref{cor:bayes} rather than a citation of it. Finally, the constant policy $\rho \equiv \mathsf{ask}$ has risk $\kappa+\varepsilon$, so under Equation~\ref{eq:alwaysask} it beats the Bayes-optimal no-query rule and hence every no-query rule; since that policy lies in the restricted class, the conclusion holds a fortiori for the unrestricted problem.
\end{proof}

\begin{remark}[What the restriction costs]
\label{rem:askclass}
An unrestricted policy may act on $(\mathcal{G}, \tilde{Z})$ jointly rather than on $\tilde{Z}$ alone, and its conditional error after asking is $\mathbb{E}_{\tilde{Z}}[\min\{\Pr(Z=a \mid \mathcal{G},\tilde{Z}), 1 - \Pr(Z=a \mid \mathcal{G},\tilde{Z})\}] \le \min\{e, \varepsilon\}$, which can be strictly smaller. Consequently $R^{\star}_{\kappa}$ is an upper bound on the unrestricted optimum, the unrestricted ask region is a superset of $Q_\kappa$, and the characterization of when clarification is worthless is a statement about the restricted class only. The sufficient condition Equation~\ref{eq:alwaysask} is unaffected, since it is witnessed by a policy inside the class.
\end{remark}

\begin{remark}[The oracle assumption and its circularity]
\label{rem:oracle}
Assumption~\ref{as:clarify} fixes a single $\varepsilon$ for all histories and all $\alpha$, which grants the ask channel an accuracy that no function of $H$ need achieve. At a given $\alpha$ the channel beats the entire history exactly when $\varepsilon < \tfrac{1}{2}(1 - \mathrm{TV}_{\mathcal{H}}(\alpha))$, and since $\delta_{\mathcal{H}} \le \mathrm{TV}_{\mathcal{H}}(\alpha)$ the condition $\varepsilon < \tfrac{1}{2}(1-\delta_{\mathcal{H}})$ is the weaker worst-case version of it rather than an implication of it. Whenever it holds at the realized $\alpha$, Equation~\ref{eq:alwaysask} follows largely from the stipulation. The assumption is also close to definitional, because Assumption~\ref{as:legit} defines $Z=g$ partly through what the user confirms, which is what the clarifying question elicits. We regard Theorem~\ref{thm:clarify} as establishing that a bounded-cost query changes the achievable region, and not as evidence about how accurate real clarification is. The measured clarification precision, recall, and F1 on \bench{}, reported in Section~\ref{sec:external} and Appendix~\ref{app:bench}, are the relevant measurements.
\end{remark}

\subsection{Estimating the two distances}
\label{app:tvest}

Theorem~\ref{thm:lb} is only as useful as the distances are measurable, so we state the protocol together with what it does and does not establish.

\paragraph{One adversary, not the infimum.} Every quantity below is computed against the single adversary policy realized in \bench{}, which we write $\alpha_{0}$. Since $\delta_{\mathcal{G}} = \inf_{\alpha} \mathrm{TV}$ is an infimum over the feasible set, estimates at $\alpha_0$ bound $\mathrm{TV}(P_g|_{\mathcal{G}}, P_{\alpha_0}|_{\mathcal{G}})$ and say nothing directly about $\delta_{\mathcal{G}}$. The empirical statements therefore instantiate the fixed-$\alpha$ Equation~\ref{eq:pointwise} and not the worst-case Equation~\ref{eq:worstcase}.

\paragraph{Matched evaluation.} $\mathrm{FUR}$ and $\mathrm{RRL}$ are defined against a single pair $(P_g, P_{\alpha_0})$. The aggregate Poison ASR and genuine-update adherence rate columns of Table~\ref{tab:main} are not such a pair, since they are computed over different tracks with different claim distributions, and must not be substituted into Equation~\ref{eq:pointwise}. We construct a dedicated matched split pairing each adversarial episode with a genuine episode agreeing on topic, contradicted node type, and turn index. Matching changes the estimand to a conditional distance averaged over strata, and because turn index partly determines $r_t$ the matched distance is expected to be smaller than the deployment distance. We report the bound on the matched split and label it as such.

\paragraph{The restricted distance.} With one continuous and one categorical coordinate, $\mathrm{TV}_{\varphi}$ admits a plug-in estimate. We preregister a partition of $r_t$ into bins $\mathcal{B}$ and compute
\begin{equation}
\widehat{\mathrm{TV}}_{\varphi}(\alpha_0) = \frac{1}{2}\sum_{b \in \mathcal{B}} \sum_{o \in \mathcal{O}} \big| \widehat{P}_{g}(b,o) - \widehat{P}_{\alpha_0}(b,o) \big|,
\end{equation}
with user-level bootstrap intervals, reporting the bin count and the sample size in each cell. Two distinct effects act here and they do not cancel. Binning coarsens the $\sigma$-algebra, so by Theorem~\ref{thm:dpi} the population quantity being estimated is at most $\mathrm{TV}_{\varphi}(\alpha_0)$. Separately, the plug-in estimator of a total variation distance is upward biased in finite samples by Jensen's inequality. The first is a property of the estimand and the second of the estimator, and neither effect is hidden by the reported quantities.

\paragraph{The history distance.} No comparable plug-in estimator exists on $\Sigma_{\mathcal{H}}$. We train a balanced diagnostic discriminator, entirely separate from the gate, on one split; select its checkpoint and threshold on a second; and evaluate on a third that is touched once. On that third split we form one-sided $97.5\%$ Clopper-Pearson lower bounds on each class-conditional accuracy and combine them by a union bound, giving a $95\%$ one-sided lower bound $\underline{A}$ on population balanced accuracy. Applying Theorem~\ref{thm:lb} with $\mathcal{G} = \Sigma_{\mathcal{H}}$ to that discriminator, whose balanced accuracy satisfies $A = 1 - \tfrac{1}{2}(\mathrm{FUR}+\mathrm{RRL})$, gives
\begin{equation}
\mathrm{TV}_{\mathcal{H}}(\alpha_0) \;\ge\; 2\underline{A} - 1 .
\end{equation}
This is a lower bound and not an estimate. A discriminator that fails to separate the two laws certifies nothing, since the failure may be its own. The empirical accuracy cannot be substituted for $\underline{A}$, because it exceeds the population value with probability near one half, which is exactly what a one-sided bound may not do.

\paragraph{What the comparison does and does not show.} We report $2\underline{A}-1$ and $\widehat{\mathrm{TV}}_{\varphi}$ side by side. Their difference is not a lower bound on $\mathrm{TV}_{\mathcal{H}} - \mathrm{TV}_{\varphi}$, because a valid lower bound on that difference requires an upper bound on $\mathrm{TV}_{\varphi}$ and the binned plug-in supplies none. We therefore present the comparison as indicative of the gap that Theorem~\ref{thm:dpi} permits, and we test the strictness condition Equation~\ref{eq:strict} directly rather than inferring it from the gap. The test bins the restricted view as above, fits the conditional sign of the estimated likelihood ratio within each bin on held-out data, and asks whether both signs occur with bootstrap-significant probability. Appendix~\ref{app:falsify} reports the result as criterion F5, the criterion that would retire the second design principle if it failed.

\section{Implementation Details}
\label{app:impl}

\begin{figure}[h]
\centering
\includegraphics[width=0.95\textwidth]{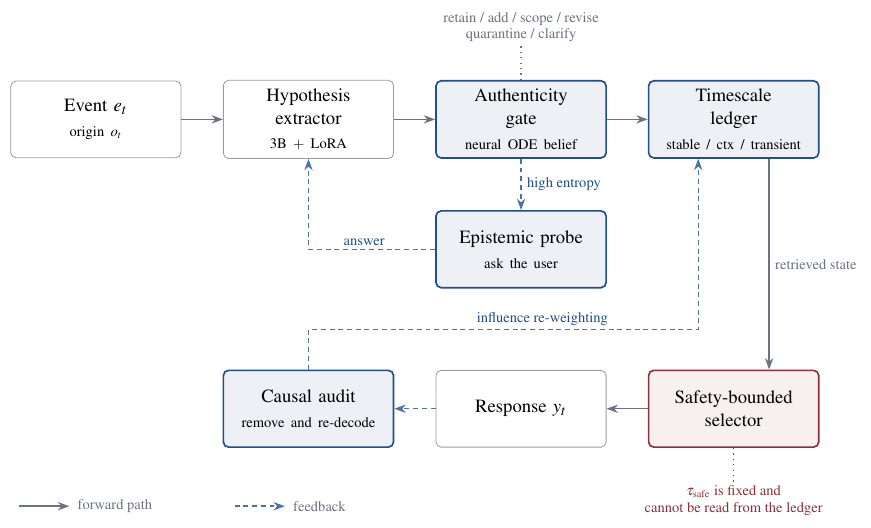}
\caption{Path of an event through \method{}. Only the gate can promote a hypothesis into the ledger, and the selector reads the ledger but the ledger cannot lower the safety threshold. The audit runs after decoding, dropping uninfluential citations and writing back an influence score that down-weights those nodes at future retrieval.}
\label{fig:arch}
\end{figure}

\paragraph{Extractor.} Qwen2.5-3B \citep{qwen2024qwen25} with LoRA rank 16, scaling 32, dropout 0.05, on attention projections only. Learning rate $2\times10^{-4}$, cosine schedule, 3 epochs, effective batch 32, bf16, schema-constrained decoding over the six-field tuple, abstention weight $\omega_{\text{abs}}=0.3$ chosen on validation from $\{0.1,0.3,0.5,1.0\}$. Held-out performance is in Table~\ref{tab:extractor}.

\paragraph{Gate.} Latent dimension 256. $f_\theta$ is a three-layer MLP with $\tanh$ activations integrated by a Dormand-Prince solver at relative tolerance $10^{-3}$ and absolute tolerance $10^{-4}$ \citep{chen2018neuralode,rubanova2019latentode}, with a GRU cell for the discrete update on event arrival. Gradients are taken by direct backpropagation through the solver rather than the adjoint method. Entropy threshold $\eta_{\mathrm{clr}}=1.24$ nats, set so the validation clarification rate is at most one turn in twelve. Quarantine rate $\gamma_q=0.135$ per day, reaching the retention floor $\phi=0.15$ at 14 days, and corroboration threshold 0.6 on the gate's confidence for the \emph{revise} action; $\phi$ is a floor on the decayed multiplier and 0.6 applies to the gate output, so the two are not on one scale.

\paragraph{Gate supervision.} Action labels come from the structured track metadata and the oracle update rules of the generation program. They were not validated by humans at the event level during construction, since the 300-episode validation covers track membership and changepoint position only. Appendix~\ref{app:independent} reports a separate audit in which annotators uninvolved in benchmark construction labeled 720 candidate events without generator metadata. The loss is a six-class cross-entropy weighted by the inverse square root of class frequency, plus a calibration penalty. Episodes are unrolled in full, at 12 to 20 events. Splits are disjoint by user, topic, attack template, and generator family. The class distribution is retain 28\%, add 17\%, quarantine 21\%, revise 14\%, scope 12\%, clarify 8\%. Clarification labels are constructed from the ambiguity present at the current event and the current oracle state only; no future event text enters either label construction or training, which matters because a clarification label that peeked ahead would train the gate on information it cannot have at inference.

\paragraph{Decay rates.} Base rates were grid-searched on validation over $\gamma_s\in\{0.003,0.01,0.03\}$, $\gamma_c\in\{0.03,0.1,0.3\}$, $\gamma_g\in\{0.15,0.5,1.5\}$, then frozen. Learning them jointly with $f_\theta$ collapsed the three toward $0.08$ and cost 3.1 points of acceptance and 1.9 of win rate.

\paragraph{Ledger.} Embeddings from \texttt{BAAI/bge-m3} at 1{,}024 dimensions, cosine similarity, retrieval $k=12$. A support or contradiction edge is written when the relation probability from a DeBERTa-v3-large NLI model, further trained on \bench{} training relations, reaches 0.70. Overwriting a stable node requires either two authenticated-user events with gate confidence at least 0.80 separated by a session boundary, or three such events within one session. A corroboration event must match the node's scope, reach cosine similarity 0.82 and NLI entailment 0.75, carry gate authenticity confidence at least 0.70, and not itself be quarantined. The retrieval score is $0.55$ similarity plus $0.20$ decayed confidence plus $0.15$ scope match plus $0.10$ influence prior, with ties broken by confidence, then recency, then node identifier. Evicted nodes persist in a non-retrievable audit log and return only through an explicit user correction.

\paragraph{Selector and audit.} Best-of-$N$ with $N=8$, $w_p=1.0$, $w_s=2.0$, $\tau_{\text{safe}}=0.35$, applied identically to every method. The audit re-decodes greedily with each cited node ablated, capped at the top three cited nodes per turn, and retrieval down-weighting uses a running mean influence with decay 0.9 per retrieval.

\paragraph{Preference reward model, used by the selector.} Qwen2.5-3B-Instruct with a scalar reward head, trained on 48{,}000 pairs, of which 40{,}000 are structured-oracle pairs and 8{,}000 are human comparisons with 2{,}000 overlapping pairs rated by three annotators each. Bradley-Terry pairwise loss, learning rate $1\times10^{-5}$, batch 128, 2 epochs, weight decay 0.01, split 70/15/15 by user. No evaluation user appears in its training data, its test portion uses the held-out generator family, held-out pair accuracy is 79\%, and it never receives the latent state as input.

\paragraph{Evaluation reward model, used only for regret.} A second model of the same architecture trained on disjoint users, responses, and generator families. No system reads it at inference, which is what separates the metric from the objective.

\paragraph{Safety scorer.} \texttt{meta-llama/Llama-Guard-3-8B}, returning $r_s = P(\text{unsafe} \mid \text{prompt}, \text{response}) \in [0,1]$, temperature-calibrated on a separate safety development set. The threshold $\tau_{\text{safe}}=0.35$ was chosen to reach at least 95\% harmful-response recall while maximizing benign acceptance, and frozen before any \bench{} test evaluation. HarmBench and StrongREJECT are used for evaluation only and never for selection.

\paragraph{Judge.} GPT-4.1 at a pinned dated version, temperature 0, one judgment in each response order, method names hidden, refusals visible as response text, and safety labels hidden. A tie option exists and awards half a win to each side, and disagreement between the two orderings is recorded as a tie. The prompt reads, \emph{You are comparing two assistant responses for one user. Use only the supplied task and verified current preference state. Select A if A better satisfies the user without adding unsupported content. Select B if B is better. Select Tie if neither has a meaningful advantage. Do not infer method identity. Return only A, B, or Tie.} Agreement was measured on 600 comparisons stratified at 120 per track with three human ratings each; the judge matches the human majority 87.4\% of the time against 91.2\% pairwise agreement among the humans.

\paragraph{Common-pool dynamic regret.} For each eligible post-changepoint T1 turn and each response backbone, 32 candidate responses are generated once from oracle memory and frozen, then reused across every method and every training seed. The pool is not a union of method-generated responses; all 32 are oracle-conditioned. Every method, including Oracle-memory, ranks the identical 32 candidates, scoring personalized utility under its own retrieved memory state and safety under the shared scorer. With $u_t(y;s_t) \in [0,1]$ the evaluation reward under the true preference state, $R_t = u_t(y^\star_t; s_t) - u_t(y_t; s_t)$ where $y^\star_t$ is the highest-utility safe response in that shared pool. No further normalization is applied. Averaging runs turn to episode to user to backbone with equal weight per user. Unsafe candidates are removed by the shared safety constraint, safety violations are reported separately rather than folded into regret, a refusal receives its evaluation-model utility rather than a fixed penalty, and equal utility yields zero regret. Because the comparator is the best member of a finite oracle-conditioned pool rather than the best possible response, this is a finite-pool approximation to the comparator in Equation~\ref{eq:objective} and we name it accordingly.

\paragraph{Evidence F1 and influence AUC.} The ground-truth evidence set is the set of nodes independently marked necessary for the target response. A node counts as influential when removing it lowers evaluation utility by at least 0.10 or changes the selected preference. A cited node in the ground-truth set is a true positive, a cited node outside it a false positive, and an uncited ground-truth node a false negative. Influence AUC is computed over node-response pairs with the score being the absolute counterfactual utility shift $|\mathrm{Inf}(n)|$ and the positive target being membership in the independently annotated necessary-evidence set of Appendix~\ref{app:independent}, never a threshold on the score itself, since that would make the measure circular. Five nodes are scored per response, the top three retrieved plus two sampled at random from the ledger, and bootstrap clustering is by user.

\paragraph{Tokens and latency per turn.} \method{} 2{,}480\,ms and 3{,}850 tokens; without the gate 1{,}850 and 3{,}200; without the neural ODE 2{,}380 and 3{,}790; without probing 1{,}880 and 3{,}100; without timescale separation 2{,}280 and 3{,}500; without provenance 2{,}300 and 3{,}550; without the safety bound 2{,}050 and 3{,}300; without the audit 1{,}450 and 2{,}780. Among baselines, Recency RAG uses 2{,}640 tokens, StateMem 3{,}290, MemGPT 3{,}410, Provenance-only 2{,}720 at 480\,ms, Static Summary 2{,}950 at 620\,ms, and Full History 6{,}800 at 740\,ms. \method{} therefore costs 17\% more tokens per turn than StateMem and 43\% fewer than Full History.

\paragraph{Compute.} Four A100 80GB. Extractor and gate training take 11 and 6 GPU-hours per seed, and full evaluation across all tracks, both backbones, and three seeds takes approximately 340 GPU-hours.

\begin{table}[h]
\centering
\caption{Extractor held-out performance and gate action confusion, the latter row-normalized.}
\label{tab:extractor}
\footnotesize
\begin{tabular}{lr@{\hskip 22pt}lrrrrrr}
\toprule
Extractor metric & Value & True $\downarrow$ / Pred $\rightarrow$ & Ret. & Add & Scope & Rev. & Quar. & Clar. \\
\midrule
Detection precision & 0.88 & Retain      & 82 & 6 & 2 & 2 & 6 & 2 \\
Detection recall    & 0.84 & Add         & 8 & 76 & 5 & 4 & 5 & 2 \\
Detection F1        & 0.86 & Scope       & 4 & 7 & 72 & 9 & 4 & 4 \\
Scope accuracy      & 0.82 & Revise      & 3 & 4 & 10 & 73 & 6 & 4 \\
Timescale accuracy  & 0.79 & Quarantine  & 5 & 3 & 2 & 5 & 78 & 7 \\
Direction accuracy  & 0.91 & Clarify     & 4 & 2 & 5 & 6 & 15 & 68 \\
Abstention prec./rec. & 0.93 / 0.89 & & & & & & & \\
\bottomrule
\end{tabular}
\end{table}

\section{Falsification Results and Secondary Metrics}
\label{app:falsify}

\paragraph{F1. Does provenance alone match \method{} on same-origin contamination?} No. Provenance-only reaches 43.6\% attack success on the same-origin track against 17.5\% for \method{}, and the gap is significant at $p<0.001$. This is the criterion whose failure would make the learned gate unnecessary. It is a performance comparison and not a test of Equation~\ref{eq:strict}, which is F5.

\paragraph{F2. Do static summaries match the multi-timescale ledger on recurrence?} No. Static summary reaches 0.41 recurring regret against 0.27, and the ablation without timescale separation reaches 0.32.

\paragraph{F3. Do cited memories predict independently annotated necessity?} Yes, moderately. Over 4{,}800 node-response pairs of which 1{,}472 are positive, the AUC of $|\mathrm{Inf}(n)|$ against membership in the independently annotated evidence set is 0.718 with a user-bootstrap interval of $[0.684, 0.751]$. The target comes from the annotation rather than from the score, since defining it by thresholding $|\mathrm{Inf}(n)|$ would make the measure circular.

\paragraph{F4. Does the advantage survive external data and real users?} Yes, but narrowly, and it shrinks. See Section~\ref{sec:external}. The win-rate margin over StateMem falls from 5.4 points to 4.5 points with real users, and on HorizonBench \method{} does not lead on belief revision accuracy at all, leading only on contradiction resolution. We regard F4 as passed but narrowly, and it is the criterion we would most want a reviewer to press on.

\paragraph{F5. Is the conditional sign of the likelihood ratio non-degenerate given $\varphi$?} Yes. On the matched split, 12 of the 32 occupied cells of the restricted view, or $37.5\%$, contain both signs of the estimated conditional likelihood ratio with bootstrap-significant probability, against the $5\%$ rate expected under degeneracy at our test level. We reject the specified binned degeneracy null in 12 of 32 occupied cells after Holm correction, which is empirical evidence consistent with Equation~\ref{eq:strict} for the realized adversary. Because the test rests on a learned cross-fitted discriminator, estimated likelihood-ratio signs, binning, and finite samples, it does not establish the population condition. This is the criterion that would retire the multi-timescale representation if it failed, and the one we would most want re-run on an independent generator.

\paragraph{C1. Consistency check, not a falsification criterion. Are the measured rates consistent with the fixed-adversary bound?} They must be, or one of the two is wrong. On the matched split of Appendix~\ref{app:tvest}, Provenance-only attains $\mathrm{FUR}+\mathrm{RRL}$ above $1 - \widehat{\mathrm{TV}}_{\varphi}$ as Equation~\ref{eq:pointwise} requires. Two caveats keep this honest. The test uses the fixed-adversary inequality and not the worst-case form, since the split realizes one adversary policy. And because binning can only reduce the population distance while the plug-in estimator is upward biased, a violation is evidence of a mis-paired split or of a rule reading outside $\varphi$ only after both effects are accounted for. We treat this as a unit test on the harness rather than as evidence for the method.

\paragraph{StrongREJECT.} Attack success under StrongREJECT is 10.6\% for the unpersonalized model, 13.2\% Full History, 12.5\% Static Summary, 14.6\% Recency RAG, 12.7\% StateMem, 11.5\% Provenance-only, 13.7\% MemGPT, 8.9\% \method{}, and 4.4\% Oracle-memory. Every value lies within 1.4 points of the corresponding HarmBench figure in Table~\ref{tab:main}, and the ordering is identical.

\begin{table}[h]
\centering
\caption{Empirical instantiation of Theorem~\ref{thm:lb} and Theorem~\ref{thm:dpi} on the matched split. $2\underline{A}-1$ is a lower bound on $\mathrm{TV}_{\mathcal{H}}(\alpha_0)$ and $\widehat{\mathrm{TV}}_\varphi(\alpha_0)$ is a binned plug-in estimate, so their difference is not itself a bound on $\mathrm{TV}_{\mathcal{H}} - \mathrm{TV}_\varphi$, as Appendix~\ref{app:tvest} explains.}
\label{tab:theoryval}
\small
\begin{tabular}{lrl}
\toprule
Quantity & Value & Note \\
\midrule
Matched pairs & 480 & 960 episodes, genuine paired to adversarial \\
Recency bins $\times$ origins & $8 \times 4$ & preregistered edges, 32 occupied cells \\
$\widehat{\mathrm{TV}}_\varphi(\alpha_0)$ & 0.3354 & 95\% user-level bootstrap $[0.298, 0.371]$ \\
$1 - \widehat{\mathrm{TV}}_\varphi(\alpha_0)$ & 0.6646 & right-hand side of Equation~\ref{eq:pointwise} \\
Provenance-only $\mathrm{FUR}$ & 0.2083 & matched split \\
Provenance-only $\mathrm{RRL}$ & 0.4563 & matched split \\
$\mathrm{FUR} + \mathrm{RRL}$ & 0.6646 & 95\% interval $[0.6091, 0.7206]$ \\
Paired difference from the bound & $0.0000$ & one-sided 95\% range $[0.0000, 0.0372]$ \\
\midrule
Discriminator balanced accuracy & 0.835 & held-out third split \\
$\underline{A}$ & 0.807 & one-sided 95\%, union-bounded Clopper-Pearson \\
$2\underline{A} - 1$ & 0.614 & lower bound on $\mathrm{TV}_{\mathcal{H}}(\alpha_0)$ \\
Non-degenerate cells & 12 / 32 & 37.5\%, criterion F5 \\
\bottomrule
\end{tabular}
\end{table}

Each bootstrap replicate resamples users while keeping genuine and adversarial episodes paired, then recomputes $\mathrm{FUR}$, $\mathrm{RRL}$, and $\widehat{\mathrm{TV}}_\varphi$ from that same replicate, so the difference $D_b = \mathrm{FUR}_b + \mathrm{RRL}_b - (1 - \widehat{\mathrm{TV}}_{\varphi,b})$ is non-negative in all 10{,}000 replicates, as Theorem~\ref{thm:lb} requires of any $\sigma(\varphi)$-measurable rule evaluated on its own sample. On the matched sample Provenance-only's error equals the binned plug-in bound to four decimals. Across paired replicates the estimated excess is non-negative and its one-sided 95\% interval $[0.0000, 0.0372]$ includes zero at its lower endpoint. The data therefore do not resolve a positive empirical gap, which is not evidence of population equality or optimality. The distance is estimated after binning, the plug-in estimator is biased, non-negativity is imposed by the inequality itself rather than observed, and equality would additionally require Provenance-only to match the empirical likelihood-ratio decision in every cell, which we have not shown. The matched split is also harder than the aggregate, which is why Provenance-only rejects 45.6\% of genuine updates here against the 25.9\% implied by Table~\ref{tab:main}, so the two must not be read together.

\section{Benchmark Construction and Study Protocol}
\label{app:bench}

\paragraph{Scale and splits.} 480 synthetic users, five episodes each, one per track, giving 2{,}400 episodes and 480 per track. Splits are by user, with 288 users and 1{,}440 episodes for training, 96 and 480 for development, 96 and 480 for test.

\paragraph{Seeding and generation.} Preference profiles are seeded from PRISM, PersonalLLM, and LaMP \citep{kirk2024prism,zollo2025personalllm,salemi2024lamp}; the seeds contribute a profile rather than dialogue text. Episode text is then generated by a language model conditioned on the profile and a track specification. Training and development use Qwen2.5-72B-Instruct and Mixtral-8x22B-Instruct, with development drawing disjoint users, topics, and prompts. Test uses Llama-3.1-70B-Instruct, a family excluded from training and development and reserved for test, so the test partition measures generalization across generator families rather than within one. Prompt templates and generation seeds are released.

\paragraph{Temporal and drift structure.} Inter-event gaps follow a truncated log-normal with median 8 hours, minimum 5 minutes, and maximum 14 days. Changepoints are sampled between turns 5 and 12. Abrupt drift is a single-event state replacement; gradual drift is a logistic transition across three to five evidence events; recurring drift is two or three switches between two previously valid states; a context switch instantiates two simultaneously valid preferences indexed by domain.

\paragraph{Contamination.} T3 episodes carry one to three attempts each, and the track holds 160 indirect-injection, 160 poisoned-summary, and 160 same-origin episodes, which is the equal weighting the aggregate Poison ASR column assumes. Same-origin episodes are built by user-mediated laundering as described in Section~\ref{sec:benchmark}, never by impersonation.

\paragraph{Visibility.} Every method sees the event text, its coarse origin label, and its timestamp. No method sees the track label, the changepoint position, or the ground-truth user state.

\paragraph{Annotation.} Three annotators labeled a 300-episode subset for track membership and changepoint position, reaching Krippendorff's $\alpha=0.73$ on track and $0.70$ on changepoint, with disagreements adjudicated by a fourth rater.

\paragraph{User study.} Forty participants, four to six sessions each over two to three weeks, recruited under institutional ethics approval with informed consent and compensated at local research rates. Every session ran against one fixed neutral assistant, Qwen3-8B-Instruct with no persistent personalization, held constant across all participants and sessions, so that no system under later comparison shaped the histories it would be evaluated on. Only user-originated content creates preference evidence during replay; assistant utterances never become evidence. At each session's end participants labeled every extracted preference as genuine, temporary, context-specific, or incorrect, which supplies the ground truth for the genuine-update adherence rate and the false-update rate. These four labels describe the scope of a valid preference and the correctness of extraction, and none of them expresses whether the assistant should have asked before updating, so they do not define an ambiguity target and no clarification F1 is computed on the replay. Because participants labeled only what the extractor surfaced, extractor misses are absent from the denominator, which is why we do not call adherence a recall.

\paragraph{Replay and clarification.} Histories were frozen and identical evidence replayed to every system. Six controlled attacks were injected per participant during replay, two from each family, for 240 in total, and no participant was exposed to any of them. When the gate asks during replay the stored participant annotation answers, so the replay is oracle-assisted with $\varepsilon = 0$ relative to those annotations, and the realized query rate is 7.1\%. No clarification quality metric is scored there, for the reason given above. On \bench{} the clarification answer instead comes from a stored response generated from the episode's latent state before evaluation rather than from the raw label, and its measured accuracy against the latent label is 92\%, giving $\varepsilon = 0.08$ in the sense of Assumption~\ref{as:clarify}. On HorizonBench, which is non-interactive, the \emph{clarify} action is unavailable and resolves to quarantine without update.

\paragraph{Satisfaction.} Satisfaction was collected in a separate blinded within-participant task rather than during interaction. All 40 participants rated four matched output pairs sampled from each completed session, with method identity hidden and left-right placement and presentation order independently randomized. Each response was rated 1 to 5, one mean was formed per participant per method, and those means were averaged across participants, giving a clustered $n=40$ per method. \method{} scored 4.1 with a participant-clustered 95\% interval of $[3.9,4.3]$ and StateMem 3.8 at $[3.6,4.0]$.

\begin{table}[h]
\centering
\caption{Leaving \bench{}. HorizonBench is zero-shot on a benchmark we did not build; BRA is belief revision accuracy, CRR contradiction resolution rate, and Retr.\ err.\ the share of errors attributable to retrieval. Replay columns come from the 40-participant study with participant-clustered 95\% intervals, six controlled attacks per participant injected only in offline replay. Bold marks the best observed non-oracle point estimate and does not by itself imply statistical significance, which is why Recency RAG is bold on BRA at 0.79 against \method{}'s 0.78. Clarification F1 is not reported because HorizonBench is non-interactive and the participant annotations do not define whether clarification was required. Oracle-memory's poisoning rate is zero by construction, since it observes the attack label, so it is a diagnostic ceiling rather than an estimated defense result, and its replay win rate is a point estimate whose participant-clustered interval we do not report. $^\dagger$The unpersonalized model never writes injected preferences, so its zero is a trivial control.}
\label{tab:external}
\footnotesize
\setlength{\tabcolsep}{4.5pt}
\begin{tabular}{lrrrrr}
\toprule
& \multicolumn{3}{c}{HorizonBench, zero-shot} & \multicolumn{2}{c}{Real users, 40 participants} \\
\cmidrule(lr){2-4}\cmidrule(lr){5-6}
Method & BRA $\uparrow$ & CRR $\uparrow$ & Retr.\ err.\ $\downarrow$ & Replay win $\uparrow$ & Replay poison $\downarrow$ \\
\midrule
No personalization & 0.42 & 0.35 & n/a & 50.0 [46.0, 54.0] & 0.0$^\dagger$ \\
Full history & 0.72 & 0.55 & 0.31 & 56.0 [51.0, 61.0] & 34 [26, 42] \\
Static summary & 0.64 & 0.50 & 0.38 & 54.5 [49.0, 60.0] & 29 [21, 37] \\
Recency RAG & \textbf{0.79} & 0.58 & 0.24 & 60.5 [55.0, 66.0] & 43 [35, 51] \\
StateMem & 0.76 & 0.65 & 0.26 & 63.0 [57.5, 68.5] & 28 [20, 36] \\
Provenance-only & 0.57 & 0.50 & 0.33 & 58.5 [53.0, 64.0] & 15 [9, 22] \\
MemGPT default & 0.70 & 0.57 & 0.29 & 60.0 [54.5, 65.5] & 35 [27, 43] \\
Sup.\ Transformer-$\Delta t$ & 0.77 & 0.68 & 0.23 & 65.8 [60.2, 71.0] & 17 [11, 24] \\
\textbf{\method{}} & 0.78 & \textbf{0.72} & \textbf{0.21} & \textbf{67.5 [62.0, 73.0]} & \textbf{12 [7, 18]} \\
\midrule
\textit{Oracle-memory} & \textit{0.91} & \textit{0.86} & \textit{0.00} & \textit{75.0} & \textit{0.0} \\
\bottomrule
\end{tabular}
\end{table}

\section{Independent Labels, Independent Attacks, and Query-Matched Baselines}
\label{app:independent}

\paragraph{Independent event-level annotation.} Three annotators uninvolved in benchmark construction labeled 720 candidate events without access to generator metadata. Agreement was $\alpha=0.78$ on authenticity, $0.74$ on scope, $0.71$ on timescale, and $0.67$ on the six-way action. The program labels agree with the adjudicated human action 83.6\% of the time, and \method{}'s action macro-F1 against the human labels is 0.71 with interval $[0.67,0.75]$. This is the closest thing in the paper to a check on the gate's supervision that does not pass through the generation program, Program-generated labels agree with independent human judgments in 83.6\% of cases, which we treat as a supervision-quality and label-noise reference and not as a ceiling on learned-model performance, since a model can outperform its training labels against human judgment when the label errors are systematic.

\paragraph{Independently authored attacks.} Twelve external red-teamers with no access to \bench{} templates authored 240 attacks, 80 per family. Attack success is 50.4\% for Recency RAG, 37.1\% StateMem, 26.3\% Provenance-only, 22.5\% for the supervised Transformer, and 18.3\% with interval $[14.0,23.1]$ for \method{}. Four of five methods show higher attack success under independently authored attacks than under our fixed generator, while Recency RAG changes slightly from 51.6\% to 50.4\%. The ordering of methods is preserved, which is the strongest available evidence that the security result is not an artifact of our attack taxonomy.

\paragraph{Adaptive attack construction.} The attacker performs a 32-candidate paraphrase search that preserves the injected claim and its origin while maximizing $P(\text{add}) + P(\text{revise}) - P(\text{quarantine}) - P(\text{clarify})$ under semantic-similarity and length constraints. It knows the released weights and the fixed threshold and does not observe the private ledger or the realized gate state. Against \method{} it raises retrievable-memory attack success to 34.4\% and response-level success to 24.7\%, and it transfers to the supervised Transformer at 27.9\% while barely moving the heuristic baselines, which expose no gradient.

\paragraph{Supervision-matched Transformer.} Four pre-layer-normalized causal Transformer blocks, hidden dimension 256, eight heads at 32 dimensions each, feed-forward dimension 1{,}024 with GELU, dropout 0.10, and 4.31M trainable parameters excluding the frozen extractor and response backbone. Standardized $\log(1+\Delta t/\text{hour})$ is projected and added to each event representation, and the full 12 to 20 event episode is processed under causal masking. Training uses the same inverse-square-root class-weighted six-class cross-entropy and calibration penalty as \method{}, AdamW at $3\times10^{-4}$ with $\beta=(0.9,0.999)$, weight decay 0.01, 5\% warm-up then cosine decay, effective batch 64 episodes, 10 epochs, gradient clipping 1.0, and checkpoint selection by development macro-F1. Memory actions write into a flat versioned store, where \emph{retain} writes nothing, \emph{add} appends a retrievable node, \emph{scope} writes a scoped version while retaining the broader node, \emph{revise} supersedes and replaces, \emph{quarantine} writes a non-retrievable pending node, and \emph{clarify} asks once then repeats the decision using the answer. Quarantine corroboration and promotion thresholds match \method{} exactly; what is absent is the stable, contextual, and transient layering and the neural ODE. Logits are temperature-scaled on development data at $T=1.11$, clarification fires when calibrated entropy exceeds 1.18 nats, selected to maximize development utility inside the 8.33\% budget, and the realized test query rate is 8.0\%.

\paragraph{Action-label agreement and calibration against human labels.} On the 720 independently annotated test events, \method{} reaches six-action macro-F1 of 0.71 with interval $[0.67,0.75]$ and the supervised Transformer 0.68 at $[0.64,0.72]$. The paired difference is $+0.03$ with interval $[-0.01, 0.07]$ at $p=0.13$, so \method{}'s action classification against human labels is numerically better and not significantly so, which locates its larger downstream advantage in the ledger and the temporal update rather than in per-event classification accuracy. Top-label expected calibration error on the same events, using 15 equal-mass bins with temperature fitted on development data only, is 0.047 at $[0.035,0.060]$ for \method{} and 0.062 at $[0.048,0.078]$ for the Transformer. These two are directly comparable because both predict the same six actions against the same labels, which is not true of the heuristic baselines and is why no calibration column appears in Table~\ref{tab:main}.

\paragraph{Adaptive attacker.} The attacker never queries the deployed gate and never reads its logits or realized hidden state. It replays the visible transcript through a local copy of the released weights to form 16 shadow-state particles, sampling the unavailable memory decisions from that copy to produce plausible hidden trajectories, and scores each paraphrase by the four action probabilities averaged over those particles. The probabilities are therefore surrogate estimates rather than readings from the deployed system, which keeps the experiment inside Assumption~\ref{as:adv}, and the correct description is a released-weight hidden-state-surrogate attack rather than full state-aware white-box optimization. Paraphrases come from Llama-3.1-70B-Instruct, 32 candidates in four batches of eight at temperature 0.9 and top-$p$ 0.95 with at most 128 output tokens, constrained to 80 to 120\% of the original payload length within a hard range of 16 to 128 tokens, BGE-M3 cosine similarity at least 0.88, and DeBERTa-v3-large NLI entailment of the injected claim at least 0.90. Origin label and injection timing are held fixed.

\paragraph{Red-team protocol.} Twelve people uninvolved in the project were recruited through security and NLP research mailing lists, four of them LLM-security researchers, four NLP and ML graduate researchers, and four experienced prompt red-teamers. They were told only that the target was a personalized memory system and were given the desired malicious preference and the permitted channel. They did not see \method{}'s architecture, weights, threshold, outputs, benchmark templates, or any existing attack. Each produced 20 accepted attacks from at most 30 drafts, with family assignments rotated so the final set holds 80 per family, and compensation was \$50 per hour against a median 2.3 hours. Because 240 attacks are evaluated against 96 test users, each user is targeted about 2.5 times and dependence exists on both the authoring and the target axis, so red-teamer clustering alone would understate it. Intervals use 10{,}000 crossed two-way bootstrap replicates in which red-teamers and target users are independently resampled with replacement, every attack weighted by the product of its author and target-user multiplicities, and method outputs kept paired within each attack. Point estimates are 50.4\% for Recency RAG, 37.1\% StateMem, 26.3\% Provenance-only, 22.5\% for the Transformer, and 18.3\% for \method{}.

\paragraph{Adaptive comparison against Provenance-only.} Using the same adaptive attempts for both methods and resampling test users with method outputs kept paired, \method{}'s 24.7\% exceeds Provenance-only's 22.1\% by 2.6 points with a paired interval of $[-1.9, 7.2]$ at $p=0.29$. The two are therefore not reliably separated on adaptive attack success, and what distinguishes them is adherence at 83.5\% against 74.1\%.

\paragraph{Sampling of the 720 annotated events.} All 720 come from the frozen test partition and cover all 96 test users, with none drawn from training or development. They are stratified to 144 per track and 120 per action, which is appropriate for macro-F1 but deliberately does not reproduce the natural prevalence of the six actions, so the agreement figures should not be read as a prevalence-weighted accuracy.

\paragraph{Supervision-matched external evaluation.} The Transformer was frozen and evaluated zero-shot under the same protocols as \method{}. It reaches HorizonBench belief revision accuracy 0.77 against \method{}'s 0.78, contradiction resolution 0.68 against 0.72, retrieval-attributable error 0.23 against 0.21, replay win 65.8\% $[60.2,71.0]$ against 67.5\%, and replay poisoning 17\% $[11,24]$ against 12\%. Paired, the architectural gain is $+0.01$ on belief revision at $p=0.27$, $+0.04$ on contradiction resolution at $p=0.008$, $-0.02$ retrieval error with interval $[-0.040,-0.001]$ at $p=0.046$, $+1.7$ points of replay win at $p=0.031$, and $-5.0$ points of replay poisoning at $p=0.064$. The gain therefore transfers beyond our generator, significantly on contradiction resolution and replay win and not on belief revision accuracy or replay poisoning, which is a modest result and the one we would most want strengthened.

\paragraph{Query-budget-matched StateMem.} StateMem asks when its calibrated contradiction and supersession confidence margin falls below a development-selected threshold, using the same stored clarification answers and the same 8.33\% budget. It reaches 67.6\% win rate, 0.30 regret, 3.7 lag, 30.5\% poisoning, 82.4\% adherence, and clarification F1 0.51 at precision 0.56 and recall 0.47, on a realized query rate of 8.1\%. Query access is therefore worth roughly 1.5 points of win rate and 4.3 points of poisoning resistance to StateMem, which is a real gain and does not account for \method{}'s remaining margin.

\paragraph{Memory-level versus response-level attack success.} Table~\ref{tab:main} reports response-level success, meaning the injected claim becomes retrievable and subsequently causes an eligible response to follow the injected preference. Writing a claim into quarantine alone does not count. Retrievable-memory success is higher throughout, at 55.2\% Static Summary, 69.0\% Recency RAG, 48.1\% StateMem, 27.5\% Provenance-only, 59.8\% MemGPT, and 18.8\% with interval $[15.7,22.2]$ for \method{}, against its 11.5\% at $[9.7,13.4]$ response-level. Full History has no write decision so the memory-level quantity is undefined for it.

\paragraph{Memory admission and output adherence.} A single ledger-promotion column across methods would mislead, since Full History and Recency RAG have no comparable promotion operation. The comparable quantity is the rate at which genuine evidence becomes available to response generation, which is 100.0\% for Full History, 93.8\% Recency RAG, 87.4\% StateMem, 86.1\% MemGPT, 79.6\% Static Summary, 78.9\% Provenance-only, and 88.2\% for \method{}. For \method{} specifically, ledger promotion is 88.2\% with interval $[85.4,90.8]$ while output adherence is 83.5\% at $[80.6,86.2]$, and removing the safety bound leaves promotion unchanged at 88.2\% while raising adherence to 87.5\%. The gap between the two, and its sensitivity to a downstream component, is exactly why adherence is named as an output-level quantity.

\paragraph{Clarification cost, and why this is not a test of Equation~\ref{eq:alwaysask}.} From matched direct-answer against question-then-answer trajectories, the normalized utility cost of delaying an answer by one turn is $\widehat{\kappa} = 0.094$ with interval $[0.081, 0.109]$. With $\varepsilon = 0.08$ from Appendix~\ref{app:bench}, $\widehat{\kappa} + \varepsilon = 0.174$, against $\tfrac{1}{2}(1 - \widehat{\mathrm{TV}}_\varphi) = 0.3323$. We present this as an illustrative comparison and explicitly not as a checked instantiation of Equation~\ref{eq:alwaysask}, for two reasons. The theorem's cost is in the units of zero-one loss while $\widehat{\kappa}$ is a normalized utility difference, and we have not established that the two scales coincide, so the sum $\widehat{\kappa} + \varepsilon$ is not the quantity the theorem constrains. Separately, the binned plug-in estimate of the distance is neither an upper nor a lower bound on the population value, since binning and finite-sample bias act in opposite directions, so it cannot certify that the inequality holds. What the comparison does show is that the measured cost of asking is well inside the range where the theorem's mechanism would operate, which is weaker than a verification and is all we claim.

\paragraph{Evidence-set annotation.} Two annotators independently labeled the necessary-evidence set for each of 960 test responses, two per episode, with a third adjudicating disagreements and a 240-response subset labeled by all three. Annotators saw the raw history, the target prompt, the response, and anonymized candidate nodes in randomized order, and did not see method names, model citations, influence scores, counterfactual results, or retrieval rank. Mean pairwise evidence-set F1 is 0.82 and node-level Krippendorff's $\alpha$ is 0.76.

\paragraph{Strictness-test procedure.} The diagnostic discriminator is a two-layer bidirectional GRU with 256 hidden units per direction over frozen BGE-M3 event embeddings, $\log(1+\Delta t)$, origin labels, and extractor fields, with attention pooling into a two-layer binary head. It shares no parameters with \method{}. The 480 matched pairs are partitioned by user into 288 pairs for training, 96 for selection and calibration, and 96 for final evaluation. For the strictness test, five-fold cross-fitting produces out-of-fold calibrated probabilities for all 480 pairs, from which $\widehat{\ell}(H) = \log \widehat{P}(Z=g \mid H) - \log \widehat{P}(Z=a \mid H)$ is formed under equal priors. Within each of the 32 recency-origin cells a user-level bootstrap intersection-union test checks whether both signs of $\widehat{\ell}$ occur with non-zero probability, and Holm correction is applied across the 32 cells at familywise $\alpha = 0.05$. Twelve cells survive.

\section{Statistical Protocol}
\label{app:stats}

Synthetic and external results use user-level paired bootstrap with 10{,}000 resamples for 95\% intervals, resampling users rather than episodes so that repeated episodes from one simulated user do not count as independent evidence. The independently authored attack analysis is the exception, since its 240 attacks are written by twelve people and evaluated against 96 users, so its intervals require a crossed two-way resample over red-teamer and target user, with method outputs for a given attack kept paired within each replicate. Elsewhere we use paired randomization tests with 10{,}000 permutations, corrected across the comparisons within each table by the Holm procedure. Human-study results use participant-clustered bootstrap, resampling participants rather than sessions, and within-participant comparisons throughout. We report three training seeds and give intervals rather than isolated significance claims wherever the sample is small enough that a $p$-value would mislead. For the headline comparisons in Table~\ref{tab:main}, the 95\% intervals are $[69.1, 73.8]$ for \method{}'s win rate against $[63.6, 68.5]$ for StateMem, and $[9.7, 13.4]$ for \method{}'s poison ASR against $[18.1, 22.0]$ for Provenance-only. Differences on the replay are reported paired rather than as overlapping marginals, since separate intervals establish nothing, and the win advantage over StateMem is $+4.5$ points with interval $[1.0, 8.0]$ at $p=0.014$, and the satisfaction advantage is $+0.30$ points at $[0.10, 0.50]$ and $p=0.004$. Against the supervised Transformer the win advantage is $+2.2$ points at $[0.6, 3.9]$ and $p=0.011$, and the poisoning advantage $-4.4$ points at $[-7.0, -1.9]$ and $p=0.003$. Per-cell intervals for every table are released with the artifacts.

\section{Component Ablation}
\label{app:ablation}

\begin{table}[h]
\centering
\caption{Component ablation. Deltas are against the full system on common-pool dynamic regret, whose absolute values are 0.27 for the full system rising to 0.42 without the gate. Positive $\Delta$ regret and $\Delta$ poison ASR are worse; positive $\Delta$ genuine-update adherence rate is better. Latency is per turn, added over the unpersonalized backbone, and token counts are in Appendix~\ref{app:impl}. Bold marks the full system's reference values in the delta columns and the column best elsewhere.}
\label{tab:ablation}
\footnotesize
\setlength{\tabcolsep}{3.4pt}
\begin{tabular}{lrrrrrrr}
\toprule
Variant & $\Delta$Regret & $\Delta$Poison & $\Delta$Benign & Safety $\downarrow$ & Refusal $\downarrow$ & Evid.\ F1 $\uparrow$ & Latency \\
\midrule
\textbf{\method{} full} & \textbf{0.00} & \textbf{0.0\,pp} & \textbf{0.0\,pp} & \textbf{8.0\%} & 9.0\% & \textbf{0.79} & 2480\,ms \\
w/o authenticity gate & +0.15 & +22.0\,pp & $-$2.5\,pp & 9.5\% & 8.3\% & 0.64 & 1850\,ms \\
w/o neural ODE & +0.02 & +2.1\,pp & $-$1.8\,pp & 8.2\% & 8.9\% & 0.76 & 2380\,ms \\
w/o epistemic probe & +0.04 & +5.0\,pp & $-$3.6\,pp & 8.5\% & 8.7\% & 0.70 & 1880\,ms \\
w/o timescale sep. & +0.05 & +3.3\,pp & $-$5.4\,pp & 8.6\% & 8.8\% & 0.67 & 2280\,ms \\
w/o provenance & +0.06 & +14.0\,pp & $-$4.0\,pp & 9.0\% & 8.4\% & 0.68 & 2300\,ms \\
w/o safety bound & $-$0.01 & +1.0\,pp & +4.0\,pp & 23.0\% & \textbf{4.8\%} & 0.77 & 2050\,ms \\
w/o causal audit & +0.01 & +0.6\,pp & $-$0.4\,pp & 8.2\% & 8.9\% & 0.58 & \textbf{1450\,ms} \\
\bottomrule
\end{tabular}
\end{table}

\section{Failure Analysis}
\label{app:failure}

\paragraph{Successes.} Two behaviors are worth naming. On the context-switch track, the modal correct action is \emph{scope} rather than \emph{revise}, and \method{} selects it in 71\% of context-switch changepoints against 4\% for StateMem, which has no representation for a narrowed belief and must overwrite instead. Separately, the context-switch regret gap is 0.07 against Recency RAG and 0.09 against Provenance-only. StateMem is not among the baselines Appendix~\ref{app:breakdown} reports on that track, so the scope statistic and the regret figures are two measurements rather than one decomposition. On contamination, the gate's most common success is not outright rejection but quarantine followed by non-corroboration, which is how 58\% of blocked poisoning attempts resolve. The claim is held, never corroborated, and decays out of the ledger.

\paragraph{Failures.} The failures cluster. Of the poisoning attempts that succeed, same-origin writes account for 50.7\%, consistent with the 17.5\% same-origin rate in Appendix~\ref{app:breakdown} against 7.0\% and 10.0\% on the other two families, and a further 21\% are slow attacks that assert a small shift repeatedly across many turns, which is by construction hard to separate from gradual drift and which we do not claim to solve. The legitimate updates \method{} misses are dominated by weak evidence, where the user signals a change indirectly rather than stating it. Differences against StateMem on win rate and poison ASR are significant at $p<0.01$ under a paired randomization test with Holm correction. The adaptation lag difference against Recency RAG is not, at $p=0.07$.

\section{Drift and Contamination Breakdown}
\label{app:breakdown}

\begin{table}[h]
\centering
\caption{Breakdown by drift shape and contamination family. Regret columns are common-pool dynamic regret. Provenance filtering wins on indirect injection and fails on same-origin attacks, the regime Theorem~\ref{thm:main} identifies. Each method's mean over the three attack families reproduces its Poison ASR column in Table~\ref{tab:main} under equal family weights.}
\label{tab:breakdown}
\small
\setlength{\tabcolsep}{5pt}
\begin{tabular}{lrrrrrrr}
\toprule
& \multicolumn{4}{c}{Drift regret $\downarrow$} & \multicolumn{3}{c}{Attack success $\downarrow$} \\
\cmidrule(lr){2-5}\cmidrule(lr){6-8}
Method & Abrupt & Gradual & Recurring & Ctx-switch & Indirect & Summary & Same-origin \\
\midrule
Static summary & 0.39 & 0.34 & 0.45 & 0.42 & 38.9\% & 55.4\% & 27.2\% \\
Recency RAG & \textbf{0.26} & 0.28 & 0.39 & 0.35 & 56.8\% & 48.7\% & 49.2\% \\
StateMem & 0.28 & 0.27 & 0.35 & 0.32 & 30.0\% & 32.4\% & 42.0\% \\
MemGPT default & 0.32 & 0.31 & 0.38 & 0.36 & 47.0\% & 50.9\% & 35.0\% \\
Provenance-only & 0.31 & 0.33 & 0.41 & 0.37 & \textbf{4.8\%} & 11.6\% & 43.6\% \\
\textbf{\method{}} & \textbf{0.26} & \textbf{0.23} & \textbf{0.31} & \textbf{0.28} & 7.0\% & \textbf{10.0\%} & \textbf{17.5\%} \\
\bottomrule
\end{tabular}
\end{table}

\section{Cross-Backbone and Out-of-Domain Results}
\label{app:crossbackbone}
\vspace{-2pt}

\begin{table}[!ht]
\centering
\small
\caption{Cross-backbone and out-of-domain results. Out-of-domain holds out whole attack templates and drift schedules rather than individual episodes. The discrete gate receives elapsed time as $z_t = \log(1+\Delta t/\text{hour})$, standardized on training statistics and concatenated to the event representation before the GRU update. It recovers 1.3 of the 2.4-point win-rate gap on Qwen3 and 1.4 of 2.5 on Llama-3.1, so explicit elapsed time accounts for roughly 55\% of the continuous-time advantage and not all of it.}
\label{tab:backbone}
\small
\begin{tabular}{llrrrr}
\toprule
Gate & Response backbone & In-domain $\uparrow$ & OOD $\uparrow$ & Held-out poison $\downarrow$ & Regret $\downarrow$ \\
\midrule
Neural ODE & Qwen3-8B-Instruct & \textbf{72.3\%} & \textbf{67.1\%} & \textbf{13.8\%} & \textbf{0.25} \\
Neural ODE & Llama-3.1-8B-Instruct & 70.7\% & 65.2\% & 15.9\% & 0.29 \\
Discrete $+\,\Delta t$ & Qwen3-8B-Instruct & 71.2\% & 66.0\% & 15.0\% & 0.27 \\
Discrete $+\,\Delta t$ & Llama-3.1-8B-Instruct & 69.6\% & 64.1\% & 17.3\% & 0.31 \\
Discrete & Qwen3-8B-Instruct & 69.9\% & 64.6\% & 16.8\% & 0.29 \\
Discrete & Llama-3.1-8B-Instruct & 68.2\% & 62.9\% & 19.1\% & 0.33 \\
\bottomrule
\end{tabular}
\end{table}

\end{document}